\newcommand{\PreserveBackslash}[1]{\let\temp=\\#1\let\\=\temp}
\newcolumntype{L}[1]{>{\PreserveBackslash\raggedright\hyphenpenalty=10000}p{#1}}
\newcolumntype{R}[1]{>{\PreserveBackslash\raggedleft}p{#1}}
\newcolumntype{C}[1]{>{\PreserveBackslash\centering}m{#1}}
\newtheorem{definition}{Definition}
\newtheorem{theorem}{Theorem}
\newtheorem{lemma}{Lemma}
\newtheorem{property}{Property}
\DeclareMathOperator*{\argmin}{arg\,min}
\DeclarePairedDelimiter\floor{\lfloor}{\rfloor}
\DeclarePairedDelimiter\abs{\lvert}{\rvert}
\newcommand{\Better}{\ensuremath{\vartriangleleft}}
\newcommand{\assign}{\ensuremath{\leftarrow}}
\newcommand{\AHV}{\textrm{A}\ensuremath{_\textsc{hv}}\xspace}
\newcommand{\Adom}{\textrm{A}\ensuremath{_\text{dom}}\xspace}
\newcommand{\ARtwo}{\textrm{A}\ensuremath{_\text{R2}}\xspace}
\newcommand{\IGDplus}{IGD$^{+}$}
\begin{document}
	\title{Multi-Objective Archiving}
	
	\author{Miqing~Li, Manuel L\'{o}pez-Ib\'{a}\~{n}ez, Xin~Yao%
		
		\thanks{M. Li is with the School of Computer Science, University of Birmingham, U.~K. (e-mail: m.li.8@bham.ac.uk).} %
		
		\thanks{M. L\'{o}pez-Ib\'{a}\~{n}ez is with Alliance Manchester Business School, University of Manchester, U.~K. (e-mail:~manuel.lopez-ibanez@manchester.ac.uk).}%
		
		\thanks{X. Yao is with the Research Institute of Trustworthy Autonomous Systems and Guangdong Provincial Key Laboratory of Brain-inspired Intelligent Computation, Department of Computer Science and Engineering, Southern University of Science and Technology, Shenzhen, P. R. China,
			and is also with the School of Computer Science, University of Birmingham, U.~K. (e-mail: xiny@sustech.edu.cn).}}
	
	
	
	\maketitle
	
	\begin{abstract}
		Most multi-objective optimisation algorithms maintain an archive explicitly or implicitly during their search.
		Such an archive can be solely used to store high-quality solutions presented to the decision maker,
		but in many cases may participate in the search process
		(e.g., as the population in evolutionary computation).
		Over the last two decades,
		archiving,
		the process of comparing new solutions with previous ones and deciding how to update the archive/population,
		stands as an important issue in evolutionary multi-objective optimisation (EMO).
		This is evidenced by constant efforts from the community on developing various effective archiving methods,
		ranging from conventional Pareto-based methods to more recent indicator-based and decomposition-based ones.
		However,
		the focus of these efforts is on empirical performance comparison in terms of specific quality indicators;
		there is lack of systematic study of archiving methods from a general theoretical perspective.
		In this paper,
		we attempt to conduct a systematic overview of multi-objective archiving,
		in the hope of paving the way to
		understand archiving algorithms from a holistic perspective of theory and practice,
		and more importantly providing a guidance on
		how to design theoretically desirable and practically useful archiving algorithms.
		In doing so,
		we also present that archiving algorithms based on weakly Pareto compliant indicators (e.g., $\epsilon$-indicator),
		as long as designed properly,
		can achieve the same theoretical desirables as archivers based on Pareto compliant indicators (e.g., hypervolume indicator).
		Such desirables include the property \emph{limit-optimal},
		the limit form of the possible optimal property that a bounded archiving algorithm can have with respect to the most general form of superiority between solution sets.
		

	\end{abstract}
	
	\begin{IEEEkeywords}
		Multi-objective optimisation, evolutionary computation, archive, archiving methods, population update, environmental selection
	\end{IEEEkeywords}
	
	
	
	
	\section{Introduction}
	\IEEEPARstart{M}{ulti}-objective optimisation refers to an optimisation scenario
	where several conflicting objectives are optimized simultaneously.
	A prominent feature of a multi-objective optimisation problem (MOP) is that,
	in contrast to its single-objective counterpart,
	it does not have a single optimal solution,
	but rather a set of trade-off solutions,
	called Pareto-optimal solutions
	or the Pareto front in the objective space,
	whose size is usually prohibitively large or even infinite.
	
	A common way to solve an MOP is to find a good, but smaller size, approximation to its Pareto front
	and present it to the decision maker, who chooses one solution from the approximation to deploy.
	Multi-objective optimisers designed for this purpose
	maintain an \emph{archive}, i.e., a set of high-quality solutions discovered during the search.
	In this context, \emph{archiving} is the process of updating the archive by comparing new solutions with those already in the archive and deciding which ones to keep and which ones to discard~\cite{KnoCor2003tec,KnoCor2004lnems}.
	
	Archiving becomes even more relevant in evolutionary multi-objective optimisation (EMO),
	in which population-based evolutionary search is performed.
	The population update (i.e., environmental selection)
	in EMO can be seen as an archiving process~\cite{Hanne1999ejor,LopKnoLau2011emo},
	where the population is regarded as an archive
	updated at each generation;
	that is,
	new offspring solutions are compared with the ones already in the population,
	either chunk by chunk, e.g., in the generational evolution of NSGA-II~\cite{Deb02nsga2} and SPEA2~\cite{ZitLauThie2002spea2}) or one-by-one, e.g., in the steady-state evolution of SMS-EMOA~\cite{BeuNauEmm2007ejor} and MOEA/D~\cite{ZhaLi07:moead}.
	
	Ideally, the optimiser would store all Pareto-optimal solutions ever evaluated in an unbounded archive~\citep{FieEveSing2003tec}. In practice, however, the computational cost of maintaining an unbounded archive often makes this approach infeasible.  Thus, all archivers share the common purpose of preserving a solution set of bounded size. Thus, the core issue in archiving is to decide which solutions are kept and which are removed
	when new solutions arrive.
	Many archiving algorithms (\emph{archivers}) have been proposed in the literature using different rules (or selection criteria in the EMO),
	ranging from traditional Pareto-based criteria, which use Pareto dominance to distinguish between solutions,
	to more recent criteria such as indicator-based and decomposition-based ones,
	which rely on a quality indicator and a set of weight vectors, respectively.

	Although there are a number of works dedicated to archiving \cite{Hanne1999ejor,KnoCor2003tec,LopKnoLau2011emo,Li2021telo}, most archiving methods were proposed and studied as part of complete MOEAs and not as independent algorithmic components; which is at odds with the fact
	that the key
	characteristic of many MOEAs is their population update (i.e., archiving) rule.
	For example,
	the innovation of NSGA-III~\cite{DebJain2014:nsga3-part1} is the method for selecting the solutions that will form the population in the next generation,
	and all of its other components
	follow common practice.

	Moreover, most studies of archiving from a theoretical perspective have focused on specific quality indicators~\cite{KnoCor2003tec,LauZen2011ejor,AugBadBroZit2012tcs,BriFri2014convergence}. Except for few exceptions~\citep{Rudolph1998ep,Hanne1999ejor,RudAga2000cec,Hanne2001emo,LopKnoLau2011emo}, it remains largely unclear what are the theoretical properties held by the archiving algorithms used within the state-of-the-art MOEAs; for example,  whether their population may suffer from deterioration in terms of Pareto-optimality,
	or whether they are able to return a maximal subset of the Pareto-optimal solutions discovered so far.
	Such properties matter, as the decision maker certainly would be unhappy if she is forced to select an inferior solution from the final archive/population
	instead of a better solution, in terms of Pareto-optimality, that was generated but later removed from the archive.
	Many unwelcome phenomena caused by the lack of such properties have been reported in the literature,
	on synthetic solution sequences~\cite{KnoCor2004lnems,LopKnoLau2011emo,Li2021telo},
	benchmark test problems~\cite{LTDZ2002b,FieEveSing2003tec,LiYao2019emo}
	and real-life scenarios~\cite{Fie2017gecco,ReeHadHerKas2013water,CheLiYao2019}.
	
	In this paper,
	we aim to conduct a comprehensive review of archiving in multi-objective optimisation that includes theoretical studies as well as archivers proposed as part of complete MOEAs.
	We begin by providing background knowledge of multi-objective optimisation (Sec.~\ref{sec:preliminaries}).
	We then define the archiving problem and summarise its history (Sec.~\ref{sec:archiving_problem}).
	In Sec.~\ref{sec:theory},
	we focus on theory of archiving and give desirable properties for archiving algorithms to hold. Our proposed formulation of these properties covers, in an uniform manner, both bounded-size archivers, which only store nondominated solutions, and fixed-size archivers, which may store dominated solutions.
	Moreover,
	we prove that archiving algorithms based on weakly Pareto compliant indicators
	(e.g., $\epsilon$-indicator~\cite{ZitThiLauFon2003:tec})
	can achieve the same theoretical desirability
	as those based on Pareto-compliant indicators
	(e.g., hypervolume~\cite{ZitThi99:spea}).
	Next,
	based upon those theoretical properties,
	we classify well-known archiving algorithms into four classes
	(Sec.~\ref{sec:classification}).
	Afterwards,
	we discuss important issues in multi-objective archiving (Sec.~\ref{sec:issues}),
	suggest several future research lines (Sec.~\ref{sec:future}),
	and provide guidance on choosing and identifying archivers (Sec.~\ref{sec:guide}).
	Lastly,
	we conclude the paper in Sec.~\ref{sec:conclusion}.
	
	
	\section{Preliminaries}\label{sec:preliminaries}
	
	In multi-objective archiving,
	we are interested in point vectors (i.e., objective vectors)
	in finite and multidimensional objective spaces.
	For simplicity,
	we always use the term ``solutions'' to refer to points in the objective space,
	even though  this term is often reserved for points in the decision space.
	For any finite and multidimensional objective space,
	an order relation
	can be defined as follows
	(w.l.o.g., we consider a minimisation scenario throughout the paper).
	
	
	\begin{definition}[Pareto dominance relation]
		Let $Y \subset \mathbb{R}^d$ be a finite, $d$-dimensional objective space ($d > 1$).
		For two solutions $y, y' \in Y$,
		$y$ is said to weakly dominate $y'$
		($y\preceq y'$),
		iff $\forall i\in 1,\dots,d$, $y_i \leq y'_i$.
		More strictly,
		$y$ is said to dominate $y'$
		($y\prec y'$),
		iff $y\preceq y'$ and $y \neq y'$.
		In addition,
		we say that two solutions are (mutually) nondominated iff $y\nprec y'$ and $y'\nprec y$.
	\end{definition}
	
	The Pareto dominance relation immediately leads to the notion of optimality in multi-objective optimisation. Thus, we can define the set of minimal elements of a given subset $P \subseteq Y$ as~\cite{LopKnoLau2011emo}:
	%
	\begin{equation} \label{eq:minimal}
	\min(P,\prec)=\{y\in P \mid \nexists y'\in P, y' \prec y\}.
	\end{equation}
	
	From Eq.~\eqref{eq:minimal}, we can define the Pareto front of $Y$ as the set of minimal elements of $Y$.
	
	\begin{definition}[Pareto-optimal solution, Pareto-optimal set and Pareto front]
		A solution $y^\ast \in Y$ is called Pareto optimal
		iff $\nexists y \in Y$, $y \prec y^\ast$.
		The set of all Pareto optimal solutions of $Y$ is called its Pareto optimal set (or Pareto front),
		i.e., $Y^\ast = \min(Y, \prec) = \{y \in Y \mid \nexists y \in Y, y \prec y^*\}$.
	\end{definition}
	
	
	We use the term \emph{nondominated set} for any set $P\subseteq Y$ with the property $P=\min(P,\prec)$.
	
	The order relations between solutions can be readily extended to sets of solutions.
	
	\begin{definition}[Pareto dominance relation between sets]
		For two solution sets $A, B \subset Y$,
		$A$ is said to weakly dominate $B$
		($A\preceq B$),
		iff $\forall b\in B, \exists a\in A$,
		$a \preceq b$.
		More strictly,
		$A$ is said to dominate $B$
		($A\prec B$),
		iff $\forall b\in B, \exists a\in A$,
		$a \prec b$.
	\end{definition}
	
	
	It can be seen that the set-based weak-dominance relation cannot rule out the equivalence between two sets,
	while the set-based dominance relation does not allow the equality between any two solutions.
	Therefore,
	\citet{ZitThiLauFon2003:tec} proposed another order relation between sets,
	called \emph{better} relation,
	which represents the most general and weakest form of superiority between two sets.
	\begin{definition}[Better relation between sets~\cite{ZitThiLauFon2003:tec}]
		For two solution sets $A, B \subset Y$,
		$A$ is said to be better than $B$
		($A\Better B$),
		iff $A \preceq B$ and $\exists a \in A$, $\forall b \in B$, $b \npreceq a$.
		\label{def:better}
	\end{definition}
	In other words,
	$A \Better B$ means that
	$A$ is at least as good as $B$,
	but $B$ is not as good as $A$,
	i.e., $A\preceq B$ but $B\npreceq A$.
	
	Unfortunately,
	the above set order relations are in general not sufficient to distinguish between solution sets.
	Two sets are incomparable as soon as there exist two mutually nondominated solutions from different sets.
	This is particularly the case in many-objective optimisation~\cite{LiLiTanYao2015many}
	as the chance of two solutions being nondominated increases exponentially with the number of objectives~\cite{FarAma2002nafips}.
	
	A total order between sets can be defined by means of a quality indicator $I$ that maps a set to a real number,
	formally,
	$I\colon \mathcal{P}(Y)\setminus\emptyset \to \mathbb{R}$,
	where $\mathcal{P}(Y)$ denotes the power set of~$Y$.\
	Yet,
	mapping a set of solution vectors to a real number inevitably results in information loss.
	We certainly hope that the mapping of a quality indicator is always compliant with the $\Better$-relation, the most general form of superiority between two solution sets.
	
	\begin{definition}[weakly Pareto-compliant indicator~\cite{ZitThiLauFon2003:tec}]
		A quality indicator $I$ is called weakly Pareto compliant
		iff $\forall A, B \subset Y$,
		if $A \Better B$
		then $I(A) \leq I(B)$ (assuming w.l.o.g. that smaller values of $I$ are preferable).
		\label{Def:weak_compliance}
	\end{definition}
	
	An indicator being weakly Pareto compliant implies that \mbox{$I(A) < I(B) \implies B \not\Better A$}, that is, if the quality indicator says that $A$ is better than $B$,
	then $B$ cannot be better than $A$ in terms of Pareto optimality, which is  the weakest requirement of an indicator.
	This property prevents a solution set being evaluated better than another by the quality indicator, yet the former will never be preferred by the decision maker
	according to Pareto-optimality.
	This unwelcome situation can happen in application scenarios
	if the indicator used does not hold this property~\cite{LiCheYao2022ieeese}.
	Fortunately,
	there are several weakly Pareto-compliant indicators in the literature,
	such as the $\epsilon$-indicator~\cite{ZitThiLauFon2003:tec}, $R2$~\cite{BroWagTrau2012r2} and \IGDplus~\cite{IshMasTanNoj2015igd}
	(see~\cite{LiYao2019qual} for a review of quality indicators).
	However,
	indicators holding this property may still fail to distinguish between two solution sets that satisfy the \Better-relation, that is, it may happen that $I(A) = I(B)$ given $A \Better B$.
	Thus, a more strict property is useful.
	
	\begin{definition}[Pareto-compliant indicator~\cite{ZitThiLauFon2003:tec}]
		A quality indicator $I$ is called Pareto compliant
		iff $\forall A, B \subset Y$,
		if $A \Better B$
		then $I(A) < I(B)$, which implies $I(A) \leq I(B) \implies B \not\Better A$.
	\end{definition}
	
	An indicator being Pareto compliant means that if a solution set is better in terms of Pareto optimality,
	then its quality value must be strictly lower.
	This implies that only the Pareto-optimal set achieves the minimum value of the indicator.
	This property is very strict and very few quality indicators hold it
	(the hypervolume indicator~\cite{ZitThi1998ppsn} is one).%
	\footnote{%
		In the literature,
		Pareto compliance  is sometimes called strong Pareto compliance and weak Pareto compliance is called Pareto compliance~\cite{KnoThiZit06:tutorial, LiYao2019qual}.}


	\section{The Archiving Problem}\label{sec:archiving_problem}
	
	\subsection{Brief History}
	
	Study on the archiving problem starts from the convergence analysis of MOEAs in the late 90s~\cite{Hanne1999ejor,RudAga2000cec}.
	Unlike the single-objective case
	where constructing a convergent EA is generally straightforward
	(via the elitist-preserving rule),
	constructing a convergent MOEA
	(i.e., the sequence of the populations produced by the MOEA converges into a subset of the Pareto front~\cite{ZitLauBleu2004tutorial})
	is non-trivial.
	The main difficulty is that, in contrast to the single-objective case where there is a total order relation between solutions, Pareto dominance is a partial order, which leads to solutions (and solution sets) being incomparable.
	An MOEA using a fixed-size population needs to decide which nondominated solutions are removed from the current population to allow the entry of newly generated ones. This decision is taken by the environmental selection method, which can be seen as an archiving method that receives new solutions as input, compares them with the ones in the population (archive), and decides which solutions are kept and which ones are thrown away.
	

	Environmental selection is usually designed on the basis of two principles:
	(1)~dominated solutions should be removed earlier than nondominated ones and
	(2)~solutions in crowded regions should be removed earlier than ones in sparse regions when all of them are mutually nondominated.
	Different ways of implementing these two principles resulted in many successful MOEAs during the period of 1999--2002,
	such as SPEA~\cite{ZitThi99:spea}, PAES~\cite{KnoCor1999cec}, PESA-II~\cite{CorKno2001pesa2}, NSGA-II~\cite{Deb02nsga2} and SPEA2~\cite{ZitLauThie2002spea2}.
	However,
	such a ``\emph{Pareto dominance $+$ density}'' criterion does not guarantee a convergent MOEA.
	Solutions in the population can deteriorate with time
	since the population may accept solutions that are dominated by a solution removed previously,
	provided that these solutions are not dominated within the current population and are located in a less crowded region
	(an illustration will be given in the next subsection).
	
	In the meanwhile,
	researchers attempted to develop MOEAs with guaranteed convergence~\cite{Rudolph1998ep,Hanne1999ejor,RudAga2000cec,Hanne2001emo} by
	dropping the density criterion and removing solutions only if they are  dominated by newcomers. This criterion ensures the the monotonicity of the populations with respect to Pareto dominance, but the final population returned may end up crowding a small region of the Pareto front.

	To address the above issues, in 2002,
	\citet{LauThiZitDeb2002archiving,LTDZ2002b} proposed the concept of $\epsilon$-approximation in archiving,
	aiming to bridge the gap of MOEAs between theoretical desirability and practical performance.
	The idea is to ensure that every solution in the Pareto front can be represented
	(i.e., $\epsilon$-dominated) by at least a solution in an archive of (polynomially) bounded size.
	However,
	the choice of the parameter $\epsilon$ becomes critical and it may not be practical to set an appropriate $\epsilon$ value for a problem whose Pareto front is unknown,
	while adapting $\epsilon$ on the fly may easily end up with too few solutions in the archive~\cite{KnoCor2004lnems}.

	By 2003,
	Knowles and Corne formalised the archiving problem and separated it from EMO as an independent research topic~\cite{KnoCor2003tec,KnoCor2004lnems}.
	They highlighted the importance of archiving in multi-objective optimisation and
	showed that, from the perspective of the no-free-lunch theorem~\cite{WolMac97:ieee-tec}, the archiving method is a critical component that distinguishes between MOEAs~\cite{CorKno2003cec}.
	They also listed several desirable properties of archivers~\cite{KnoCor2004lnems}
	and then investigated several representative archivers and their convergence properties~\cite{Knowles2002PhD,KnoCor2003tec}.
	Moreover,
	they proved that in general no archiving algorithm is able to maintain an ``optimal approximation'' of the Pareto front (see Def.~\ref{Def:Optimal_approximation} on page~\pageref{Def:Optimal_approximation}) of the sequence solutions at every timestep~\cite{KnoCor2004lnems}.

	Since then,
	more archiving algorithms with desirable theoretical properties, such as  solution monotonicity (Prop.~\ref{prop:point_monotone} on page~\pageref{prop:point_monotone}) have emerged.
	These archivers  either were based on existing concepts such as the $\epsilon$-dominance~\cite{SchLauCoeDelTal2008,Hanne2007ejor} and hypervolume~\cite{KnoCorFle2003},
	or developed new archiving criteria such as the open rectangle~\cite{JinWon2010adapt}
	and multi-level grid~\cite{LauZen2011ejor}.
	In 2011,
	\citet{LopKnoLau2011emo} systematically analysed representative archiving algorithms and presented several properties desirable for an archiver to hold,
	including $\Better$-monotonicity which is based on the \emph{better} relation defined above (Def.~\ref{def:better}),
	the weakest form of superiority between two solution sets. They also showed empirically that archiving methods used in well-known MOEAs, such as NSGA-II and SPEA2, do not hold this property and, thus, they may produce a population that is worse, in terms of Pareto-optimality, than a previous one.
	
	More recent studies have focused on the convergence of archiving algorithms with respect to specific quality indicators
	such as the hypervolume~\cite{BriFri2010ppsn,BriFri2014convergence}, $\epsilon$-box~\cite{HorNeu2008} and Hausdorff metrics~\cite{RudSchGri2016coa,SchHerTal2019archiver,HerSch2022hausdorff}
	(see \cite{SchHer2021archiving} for a review study).
	In addition, several studies have empirically investigated archivers in isolation using artificial sequences of solutions~\cite{LopKnoLau2011emo,MedGolGol2014bracis,Li2021telo}, using solution neighborhoods in combinatorial problems~\cite{LopLieVer2014ppsn} and the practical effectiveness of satisfying theoretical properties~\cite{LiYao2019emo}.

	In contrast to the relatively few theoretical results,
	developing practically effective population update (i.e., archiving) methods, regardless of their theoretical properties,
	has become the most active direction in EMO research,
	resulting in numerous MOEAs.
	These methods can mainly be categorised into three mainstream selection paradigms~\cite{WagBeuNau2007:many,EmmDeuw2018tutorial}:
	the Pareto-based (Pareto dominance $+$ density)
	\cite{Goldberg89}, the indicator-based~\cite{ZitKun2004ppsn} and the decomposition-based~\cite{ZhaLi07:moead} paradigms.
	Some researchers have also introduced an external archive to guide the evolution of the population~\cite{TiwKochFad2008amga,BriPoz2012cec,LwiQuZhe2013moss,CaiLiFan2015archive,LiYao2020ec,WanMeiZha2021mogp,Zhong2021}.
	To complement a non-Pareto selection criterion (e.g., the decomposition-based criterion) in the evolutionary population,
	the external archive is mainly based on the pattern of ``Pareto dominance $+$ density'', with the exception of some work using an indicator (e.g., hypervolume) as the criterion in archiving (see \cite{BezLopStu2019gecco}).
	In those Pareto-based archivers,
	the density estimators more frequently used are crowding distance~\cite{TiwKochFad2008amga,MouPetMcC2014mopso,CaiLiFan2015archive}, niching~\cite{TiwFadDeb2011amga2,LiYao2020ec} and grid techniques~\cite{KnoCor00paes,KnoCor2000mpaes,LuoBos2012elitist}.
	%
	Experimental work has shown that adding an external archive is often beneficial~\cite{TanIshOya2017,SchHer2021archiving}, specially if the parameters of the MOEAs are configured after adding the archive~\cite{BezLopStu2019gecco}.
	Recently,
	there is a trend in MOEA design that considers two archives, that is, two simultaneous populations that participate in solution generation, each of them updated by a different selection criterion~\cite{PraYao2006taa,UlrBadThi2010ppsn,ChenChenGon2014tcyb,LiYanLiu2016tec};
	for instance, one archive for promoting convergence and the other for promoting diversity~\cite{PraYao2006taa,UlrBadThi2010ppsn}.
	Such a two-archive approach is particularly suitable for multi-objective problems with additional features, e.g., with many objectives),
	as one archive can be designed specifically for dealing with those features.
	This is why the two-archive approach has now been used in various challenging multi-objective scenarios,
	such as many-objective optimisation~\cite{WanJiaYao2015twoarch2,LiLiTanYao2014taa,DinDonHeLi2019twoarch,MMalDas2022twiarch},
	constrained optimisation~\cite{LiCheFuYao2018twoarch,Xia2021,Li2022constrained},
	dynamic optimisation~\cite{CheLiYao2017dynamic},
	multi-model optimisation~\cite{LiuYenGon2018twoarch,LiZouYan2021twoarch},
	expensive optimisation~\cite{SonWanHeJin2021kriging},
	and real-world problems~\cite{GarLopGod2016pso,Chhabra2018,Zhang2019}.
	Finally, some archivers~\cite{SchVasCoe2011space,SchHerTal2019archiver} aim to preserve not only Pareto-optimal solutions but also \emph{nearly-optimal} ones, also called non-epsilon dominated or $\epsilon$-efficient, that is, solutions that are not too far from being Pareto-optimal. There are also archivers that maximise diversity in the decision space, typically for multimodal problems~\cite{UlrBadThi2010ppsn,DebTiw2008omni,HerSchSun2020noneps} and a number of recent archivers (see \cite{PajBlaHerMar2021archiving} for a survey) consider both near-optimality and diversity in decision space. We will not study these types of archivers in this paper since their aims are rather different from most other archivers that focus on the objective space.

	\subsection{Formal Definition}\label{sec:archiving}
	
	The archiving process can be described as updating a set of solutions $A$, an \emph{archive}, by an \emph{input sequence}
	$\mathcal{S}=\langle S^{(1)}$, $S^{(2)},\dots$, $S^{(t)},\dots\rangle$,
	which may be generated by a solution generator (e.g., an evolutionary algorithm) iteratively.
	At iteration $t$,
	the generator may generate one or multiple solutions, i.e., $\forall t, \abs{S^{(t)}} \geq 1$,
	possibly using the contents of the old archive $A^{(t-1)}$,
	where $A^{(t-1)}$ denotes the archive \emph{after} updating it with $S^{(t-1)}$,
	and $A^{(0)}$ is the empty set.
	Solutions may be fed to the archive one-at-a-time 
	as in $\epsilon$-MOEA~\cite{DebMohMis2005epsilon} and SMS-EMOA~\cite{BeuNauEmm2007ejor}
	or many-at-a-time 
	as in NSGA-II~\cite{Deb02nsga2} and SPEA2~\cite{ZitLauThie2002spea2}.
	There is no requirement that the elements in the sequence are unique.
	
	We are interested here in archives of bounded capacity,
	i.e., $\forall t, \abs{A^{(t)}} \leq N$ for some constant $N \in \mathbb{N}^+$,  smaller than the number of Pareto-optimal solutions in the input sequence, $N < \abs{Y^{*}}$. Thus, the archive $A^{(t)}$ stores a subset of the solutions in the input sequence up to time $t$.
	%
	Now we can define an archiving algorithm as follows.
	
	\begin{definition}[Archiving algorithm or Archiver]
		An archiving algorithm takes as input the previous archive $A^{(t-1)}$ and the current set in the sequence $S^{(t)}$ and returns the updated archive $A^{(t)}$, i.e., $A^{(t)} \assign \textup{Archiver}(A^{(t-1)}, S^{(t)})$, where $A^{(t)} = \{ A \subseteq A^{(t-1)} \cup S^{(t)} \mid  1 \leq \abs{A} \leq N\}$ and $S^{(t)} \subset Y$,
		where $Y \subset \mathbb{R}^d$ is a finite, $d$-dimensional objective space from which the solutions are generated.
	\end{definition}
	%
	%
	The condition \mbox{$A^{(t)} \subseteq A^{(t-1)} \cup S^{(t)}$} implies that the archiving algorithm is not allowed to revisit nor store previous solutions in the input sequence beyond those present in $A^{(t-1)}$ or duplicated in $S^{(t)}$.
	

	In EMO,
	there are generally three ways to use the archive.
	Firstly,
	the archive may be used solely to store high-quality solutions found by a search algorithm but it does not influence the generation of solutions (i.e., the sequence),
	such as in PAES~\cite{KnoCor00paes}, MOEA/D~\cite{ZhaLi07:moead} and others~\cite{Veldhuizen1999phd,FieEveSing2003tec,BezLopStu2019gecco}. This is often called an \emph{external} archive.
	
	In the second way,
	the archive not only stores high-quality solutions but also participates in some way in the generation of new solutions;
	for example,
	in the crossover operation,
	one parent solution is from the population and the other solution is from the archive,
	such as in SPEA~\cite{ZitThi99:spea}, $\epsilon$-MOEA~\cite{DebMohMis2005epsilon} and others~\cite{WanMeiZha2021mogp}.
	
	The third way is what the vast majority of MOEAs follow,
	in which the archive is essentially the evolutionary population of the MOEA and new solutions are generated solely from it, that is:
	$S^{(t)} \assign \textup{Generator}(A^{(t-1)})$; $A^{(t)} \assign \textup{Archiver}(A^{(t-1)}, S^{(t)})$.
	%
	Different MOEAs use different terms for this type of archive, e.g., NSGA-II~\cite{Deb02nsga2} calls it ``population'' whereas SPEA2~\cite{ZitLauThie2002spea2} calls it ``archive''. However, not all populations are archives, e.g., the (offspring) populations in NSGA-II and SPEA2, which temporarily store newly generated solutions, are elements of the input sequence.
	
	\begin{figure}[tbp]
		\centering%
		\footnotesize%
		\includegraphics[width=\linewidth]{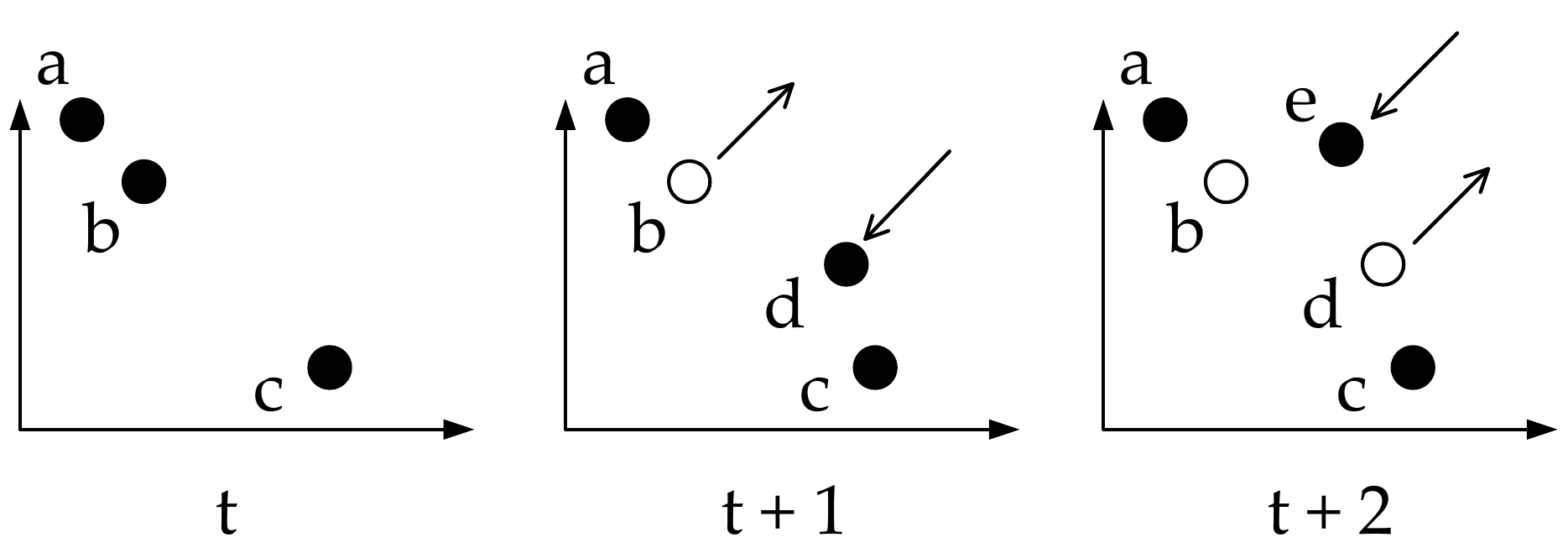}
		\caption{Illustration of an archiver based on the archiving rule ``Pareto dominance $+$ density'' deteriorating. The capacity of the archive is $3$.
			Black circles denote solutions in the archive and hollow circles denote solutions removed.
			At the timestep $t+1$, solution $d$ enters the archive and $b$ is removed
			since $d$ has less crowding degree than $b$.
			At the timestep $t+2$,
			solution $e$ enters the archive and edges out $d$ since $e$ has less crowding degree than $d$.
			Now the archive consists of $\{a, e, c\}$, which is worse than the archive of $\{a, b, c\}$ at timestep $t$.}
		\label{fig:Pareto_deterioration}
	\end{figure}
	
	An undesirable property of most MOEAs is that their archive/population may deteriorate in quality, i.e., the final archive returned by the MOEA may be worse than an archive in a previous step. This issue has been reported very early in the study of archiving~\cite{EveFieSin2002full,FieEveSing2003tec,ZitLauBleu2004tutorial,KnoCor2004lnems}. Figure~\ref{fig:Pareto_deterioration} illustrates how an archiver based on the ``\emph{Pareto dominance $+$ density (crowding distance)}'' criterion may result in an archive at timestep $t$ that is \emph{better} (see Def.~\ref{def:better}) than the archive two timesteps later. Not only density-based archivers suffer from deterioration, but also some indicator-based (e.g., in SMS-EMOA~\cite{BeuNauEmm2007ejor} as shown later in Fig.~\ref{fig:SMS}) and decomposition-based archivers (e.g., in NSGA-III~\cite{DebJain2014:nsga3-part1} as shown by~\cite{Fie2017gecco}).

	\section{Theory}\label{sec:theory}
	
	When designing an archiver,
	one may wish it to hold some theoretical properties.
	For example,
	one may wish its archive to consist of only Pareto-optimal solutions with respect to the input sequence;
	not to deteriorate (i.e., the current archive cannot be worse than at a previous timestep);
	to be able to converge with sufficient timesteps;
	and to contain as many nondominated solutions as possible within its capacity.

	Unfortunately,
	archivers in most well-known MOEAs do not hold such theoretical desirables.
	For example,
	a large portion of their final population are not Pareto optimal with respect to the solutions generated;
	e.g., as reported in~\cite{LiYao2019emo},
	the majority of solutions in the final population of NSGA-II, SPEA2 and MOEA/D
	are dominated by other solutions generated during the search on some problems, such as FON~\cite{FonFle1995ec}.
	This not only fails to return the decision maker the best solutions found,
	which could be remedied by using an external unbounded archive,
	but may also affect the search progress since the evolutionary population cannot represent the best solutions discovered.
	
	In most MOEAs, the population/archive has two roles: (1) storing the best solutions found for the decision-maker and (2) maintaining a solution set as the source for generating new solutions during the search. Here, we focus on the properties of archivers that are desirable for the first role, but not necessarily desirable for the second one.

	\subsection{Properties}
	
	
	In general,
	there are two types of properties that an archiving algorithm may have:
	anytime properties and limit properties.
	Anytime properties must hold at any timestep $t$, whereas limit properties must hold after a finite number of timesteps under the assumption that any solution
	may appear an infinite number of times in the input sequence~\cite{RudAga2000cec}.
	In the following,
	we introduce six properties from the literature~\cite{LopKnoLau2011emo,Knowles2002PhD,KnoCor2004lnems}  but formulated in a manner that is applicable to both bounded-size archivers, which only store nondominated solutions, and fixed-size archivers, which may store dominated solutions.
	The first three properties are anytime properties,
	and the last three ones are limit properties.
	Any anytime property that holds for one pass over a finite sequence should also hold in the limit, that is, unlimited passes over a finite sequence or an infinite sequence drawn from a finite set $Y$. We denote the union of all solutions seen up to time $t$ by $Y^{(t)} = \bigcup_{i=1}^t S^{(i)}$.

	\begin{property}[Pareto-subset~\cite{Knowles2002PhD}]
		An archiver has the \emph{Pareto-subset} property
		if no nondominated solution in its archive at any timestep is dominated by a solution in the input sequence seen so far:
		$\forall t \in \mathbb{N}^{+}$, $\forall a\in \min(A^{(t)}, \prec)$, $\nexists s\in Y^{(t)}$, $s\prec a$.
		\label{prop:Pareto_subset}
	\end{property}
	
	\begin{property}[point-monotone~\cite{LopKnoLau2011emo}]
		An archiver has the \emph{point-monotone} property
		if $\forall t, \forall i\in\mathbb{N}^{+}$,
		there does not exist a pair of solutions $a \in \min(A^{(t)}, \prec)$ and $a' \in \min(A^{(t+i)}, \prec)$,
		such that $a \prec a'$.
		An archiver that does not have this property is said to \emph{point-deteriorate}.
		\label{prop:point_monotone}
	\end{property}

	\begin{property}[set-monotone~\cite{LopKnoLau2011emo}]
		An archiver has the \emph{set-monotone} property
		if $\forall t, \forall i\in\mathbb{N}^{+}$,
		there does not exist a pair of archives $A^{(t)}, A^{(t+i)}$,
		such that $A^{(t)} \Better A^{(t+i)}$, i.e., $A^{(t)}$ is \emph{better} in terms of Pareto optimality than $A^{(t+i)}$.
		An archiver that does not have this property is said to \emph{set-deteriorate}.
		\label{prop:set_monotone}
	\end{property}

	Amongst the three properties,
	Property~\ref{prop:Pareto_subset}
	%
	is the strictest one to hold for an archiver and implies Property~\ref{prop:point_monotone} which implies Property~\ref{prop:set_monotone}.
	Property~\ref{prop:point_monotone}
	requires the archiver to have a rule
	prohibiting solutions from entering the archive if they are dominated by 
	solutions eliminated previously.
	Such a rule can be implemented by setting a box (defined by $\epsilon$) for every solution in the current archive and
	rejecting any oncoming solution in those boxes~\cite{LauThiZitDeb2002archiving,LTDZ2002b}.

	Property~\ref{prop:set_monotone} is more attainable; for example, if the sequence of archives never decreases the value of a Pareto compliant indicator, 
	such as the multi-level grid archiver~\cite{LauZen2011ejor}, the hypervolume-based archiver proposed by \citet{Knowles2002PhD} or the hypervolume-based environmental selection of SMS-EMOA~\cite{BeuNauEmm2007ejor} (on the condition
	that the reference point used in the calculation of hypervolume does not change).
	
	Now, we introduce three limit properties:
	
	\begin{property}[limit-stable~\cite{LopKnoLau2011emo}]\label{def:limit-stable}
		An archiver has the \emph{limit-stable} property
		if for any sequence
		there exists a timestep $t \in \mathbb{N}^{+}$ such that $\forall i\in\mathbb{N}^{+}$, $A^{(t)} = A^{(t+i)}$.
		That is,
		the archive converges to a stable solution set in finite time.
	\end{property}
	
	Being \textit{limit-stable} has some practical benefits, e.g., used as a stop condition during the search. Moreover, \emph{set-monotone} implies \emph{limit-stable}, thus an archiver that is not \emph{limit-stable} cannot be \emph{set-monotone}.
	But it may be of more interest
	if all solutions of the converged archive are Pareto-optimal solutions of the sequence:
	
	\begin{property}[limit-Pareto-subset]\label{def:limit-pareto}
		An archiver has the \emph{limit-Pareto-subset} property
		if for any sequence
		there exists a timestep $t \in \mathbb{N}^+$ such that $\forall i\in\mathbb{N}^{+}$, $A^{(t)} = A^{(t+i)}$ and
		$\min(A^{(t)},\prec) \subseteq Y^*$, where $Y^* = \min(Y,\prec)$.
		That is,
		the archive converges to a subset of the Pareto optimal set of $Y$.
	\end{property}
	
	Converging to a stable Pareto subset is desirable,
	but one may also care about the number of Pareto optimal solutions in the converged archive.
	The decision-maker may not be very happy if an archive of capacity $N=100$ ends up with only one Pareto-optimal solution after being feed with hundreds of them.
	This intuition leads to the following definition and property.
	
	\begin{definition}[optimal approximation of bounded size~\cite{LopKnoLau2011emo}]
		Let a solution set $A \subseteq Y$, $1 \leq \abs{A}\leq N$, be a nondominated set,
		i.e., $A=\min(A,\prec)$.
		If $\nexists B \subseteq Y$, $\abs{B}\leq N$, such that $B \Better A$,
		then $A$ is called an optimal approximation with bounded size $N$ of $Y^*$,
		where $Y^* = \min(Y,\prec)$.
		\label{Def:Optimal_approximation}
	\end{definition}
	
	An optimal approximation of bounded size $N$ of 
	the nondominated solutions in the sequence seen so far is the best possible archive that a bounded archiver can produce with respect to Pareto optimality.
	Unfortunately, as proved by Knowles and Corne~\cite{KnoCor2004lnems},
	no archiver can guarantee to store at least $N$ nondominated solutions (or the number of nondominated solutions in the sequence seen so far, if the latter is smaller than $N$). As a consequence, no archiver can guarantee to store an optimal approximation of bounded size $N$ at every timestep and for any finite sequence~\cite{LopKnoLau2011emo}.
	Having said that,
	its limit form may be achievable.
	
	\begin{algorithm*}[tbp]
		\caption{Archiving algorithm based on a weakly Pareto-compliant indicator}
		\label{Alg:Weak_compliance}
		\DontPrintSemicolon
		\SetNoFillComment
		\SetAlgoLined
		\KwIn{$A^{(t-1)}$, $s^{(t)}$}
		\uIf(\tcp*[f]{Rule 1: if the new solution is weakly dominated by a solution in the archive.}){$\exists a \in A^{(t-1)}$, $a \preceq s^{(t)}$}{
			$A^{(t)} \assign A^{(t-1)}$
		}
		\uElseIf(\tcp*[f]{Rule 2: if the number of all nondominated solutions is less than or}){$\abs{\min(A^{(t-1)} \cup \{s^{(t)}\},\prec)} \leq N$}{
			$A^{(t)} \leftarrow \min(A^{(t-1)} \cup \{s^{(t)}\},\prec)$
			\tcp*[f]{\hspace{3.25em}equal to the archive capacity after adding the new solution.}
		}
		\uElseIf(\tcp*[f]{Rule 3: if the new solution cannot lead to a better}){$ I(A^{(t-1)}) \leq \min_{a\in A^{(t-1)}}\{I(A^{(t-1)} \cup \{s^{(t)}\} \setminus \{a\})\} $}{
			$A^{(t)} \leftarrow A^{(t-1)}$\tcp*[f]{$\phantom{Rule 3: }$   indicator value.\hspace{11em}}
		}
		\Else(\tcp*[f]{Rule 4: the new solution can lead to a better indicator value.}){
			$a' \leftarrow \argmin_{a\in A^{(t-1)}}\{I(A^{(t-1)} \cup \{s^{(t)}\} \setminus \{a\})\}$\;
			$A^{(t)} \leftarrow  A^{(t-1)} \cup \{s^{(t)}\} \setminus \{a'\}$
		}
		\KwOut{$A^{(t)}$}
	\end{algorithm*}

	\begin{property}[limit-optimal~\cite{LopKnoLau2011emo}]\label{def:limit-optimal}
		An archiver has the \emph{limit-optimal} property
		if for any sequence
		there exists a timestep $t \in \mathbb{N}^+$ such that $\forall i\in\mathbb{N}^{+}$, $A^{(t)} = A^{(t+i)}$ 
		and
		$\min(A^{(t)}, \prec)$ is an optimal approximation of size $N$ of $Y^*$,
		where $Y^* = \min(Y, \prec)$ and $N$ is the capacity of the archive.
	\end{property}

	An archiver that optimizes a Pareto-compliant indicator when updating the
	archive will be \emph{limit-optimal}~\cite{LopKnoLau2011emo}.
	Holding this property can bring practical benefits, as shown in~\cite{LiYao2019emo},
	where a hypervolume-based archiver significantly outperforms other non-convergence-guaranteed archivers.
	In the next subsection,
	we prove that an archiver optimising a weakly Pareto-compliant indicator
	as the archiving criterion may also hold this property.
	That means that some popular indicators in the EMO area,
	such as the $\epsilon$-indicator~\cite{ZitThiLauFon2003:tec} and \IGDplus~\cite{IshMasTanNoj2015igd},
	can also lead to convergence-guaranteed archivers. 

	\subsection{Convergence-Guaranteed Archivers with Weakly-Pareto-Compliant Indicators}\label{sec:theory_weakly}
	
	
	Weak Pareto compliance is a weaker version of Pareto compliance for quality indicators.
	If an indicator is weakly Pareto compliant,
	then it may not be able to distinguish between two sets, even if one is better  than the other with respect to Pareto optimality (Def.~\ref{def:better}).
	For example, given the sets
	$A = \{(0,1), (0.5,0.5), (1,0)\}$ and $B=\{(0,1),(1,0)\}$, then $A \Better B$
	($A$ provides the decision maker with one more option than $B$).
	A weakly Pareto compliant indicator may evaluate them to be the same,
	but a Pareto compliant indicator will evaluates $A$ to be better than $B$.
	
	We show here that, with respect to the theoretical properties for archiving presented above, 
	weakly-Pareto-compliant indicators are as desirable as Pareto-compliant ones, if the archiving rules are designed properly.
	Until now, only archivers based on a Pareto-compliant indicator were known to hold properties such as \emph{limit-optimal}
	(see Table~\ref{Table:classifcation} in the next section).
	
	In this section, in accordance with previous studies~\cite{KnoCor2004lnems,CorKno2003cec,LauThiZitDeb2002archiving,LopKnoLau2011emo},  we assume the input sequence is a sequence of individual solutions presented to the archiving algorithm one at a time, that is,
	$\forall t, \abs{S^{(t)}} = 1$ (Section~\ref{sec:archiving}).
	We can always convert a many-at-a-time sequence into a one-at-a-time sequence,
	thus properties that hold for archivers that handle the latter also hold for archivers that handle the former.
	The opposite is not true, however, as we will discuss later in Section~\ref{sec:size}.
	To make explicit when we refer to a one-at-a-time sequence,
	we denote its elements by $s^{(t)}$ and the sequence up to the time $t$ by $\mathcal{S}^{(t)} = \langle s^{(1)}$, $s^{(2)},$ $\dots,$ $s^{(t)}\rangle$.
	
	We consider a generic archiving algorithm (Algorithm~\ref{Alg:Weak_compliance}) based on a weakly Pareto-compliant indicator $I$.
	The archiving rules we propose are very general and similar to those used in most indicator-based archivers: (1) uses weak Pareto dominance relation to compare solutions in the archive with the new solution;
	(2) checks if the nondominated set after adding the solution exceeds the archive capacity;
	(3) checks if adding the new solution does not lead to a better indicator value;
	otherwise (4) removes the archived solution that contributes the least to the indicator value after adding the new solution.  

	We now prove that, assuming the indicator $I$ is weakly Pareto compliant, the archiver in Algorithm~\ref{Alg:Weak_compliance}
	holds the three limit properties:
	\textit{limit-stable}, \textit{limit-Pareto-subset} and \textit{limit-optimal}.
	To do so, we first need to introduce several auxiliary properties of the archiver.

	\begin{lemma}
		The  $I$ value of the archive under Algorithm~\ref{Alg:Weak_compliance} never degrades:
		$\forall t \in\mathbb{N}^{+}, I(A^{(t+1)}) \leq I(A^{(t)})$.
		\label{lemma:1}
	\end{lemma}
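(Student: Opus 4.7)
The plan is to prove the lemma by a straightforward case analysis on which of the four rules is applied to produce $A^{(t+1)}$ from $A^{(t)}$ and $s^{(t+1)}$. The two ``do nothing'' rules (Rule~1 and Rule~3) give equality of indicator values for free; Rule~4 gives a \emph{strict} decrease directly from the condition that triggers it; the only real content is Rule~2, where we need to invoke the weak Pareto-compliance of $I$.

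First I would dispatch the easy cases. Under Rule~1 or Rule~3 we have $A^{(t+1)} = A^{(t)}$, so trivially $I(A^{(t+1)}) = I(A^{(t)})$. Under Rule~4, the condition guarding Rule~3 fails, which means
\begin{equation*}
I(A^{(t)}) > \min_{a\in A^{(t)}} I(A^{(t)} \cup \{s^{(t+1)}\} \setminus \{a\}) = I(A^{(t+1)}),
\end{equation*}
by the very definition of $A^{(t+1)}$ in Rule~4. So in this case $I(A^{(t+1)}) < I(A^{(t)})$.

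The core case is Rule~2, where $A^{(t+1)} = \min(A^{(t)} \cup \{s^{(t+1)}\}, \prec)$. My plan is to show $A^{(t+1)} \Better A^{(t)}$ and then apply Definition~\ref{Def:weak_compliance}. For the $A^{(t+1)} \preceq A^{(t)}$ direction, I will use the general fact (easy from the finiteness of $Y$ and the definition of $\min$) that every element of a finite set $P$ is weakly dominated by some element of $\min(P,\prec)$; applied to $P = A^{(t)} \cup \{s^{(t+1)}\}$, this yields a weak-dominator in $A^{(t+1)}$ for every element of $A^{(t)}$. For the witness required by the $\Better$ relation, I will use $s^{(t+1)}$ itself: Rule~1 failed, so no $a \in A^{(t)}$ weakly dominates $s^{(t+1)}$; hence $s^{(t+1)} \in A^{(t+1)}$ and no element of $A^{(t)}$ weakly dominates it. Together these give $A^{(t+1)} \Better A^{(t)}$, and weak Pareto-compliance of $I$ yields $I(A^{(t+1)}) \leq I(A^{(t)})$.

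The only subtlety, and the part I would be most careful about, is the $\min$-domination fact used in Rule~2: it is routine but slightly non-trivial and relies on $Y$ being finite so that the chain of strict dominators must terminate at a minimal element. Once that is in hand the lemma drops out from the case analysis. Note also that each case preserves the invariant that $A^{(t+1)}$ is a nondominated set (Rule~2 applies $\min$ explicitly, while Rule~4 is reached only when $|A^{(t)}|=N$ and $s^{(t+1)}$ is mutually nondominated with every element of $A^{(t)}$, since otherwise $|\min(A^{(t)}\cup\{s^{(t+1)}\},\prec)|\leq N$ and Rule~2 would have fired); this invariant will be useful for the subsequent limit-property lemmas even though it is not strictly needed for Lemma~\ref{lemma:1} itself.
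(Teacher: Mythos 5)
Your proposal is correct and follows essentially the same case analysis as the paper's proof: Rules~1 and~3 leave the archive unchanged, Rule~4 strictly improves $I$ by construction, and Rule~2 yields $A^{(t+1)} \Better A^{(t)}$ so that weak Pareto compliance gives $I(A^{(t+1)}) \leq I(A^{(t)})$. The only difference is that you spell out the $\Better$-relation argument for Rule~2 (the weak-dominator from minimality and the witness $s^{(t+1)}$ supplied by the failure of Rule~1), which the paper simply asserts.
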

	\begin{proof}
		Rules 1 and 3 will not change the $I$ value as the archive remains the same.
		Rule~4 will lead to a better $I$ value according to the definition.
		For Rule~2, $A^{(t)} = \min(A^{(t-1)} \cup \{s^{(t)}\}, \prec)$ and
		the nondominated set of the union of the archive and the new solution is \emph{better}
		than the archive, i.e., $ A^{(t)} \Better A^{(t-1)}$,
		which implies $I(A^{(t)}) \leq I(A^{(t-1)})$
		because of the definition of a weakly Pareto-compliant indicator 
		(Definition~\ref{Def:weak_compliance}).
		Thus, the $I$ value of the archive will never degrade.
	\end{proof}

	\begin{lemma}
		Under Algorithm~\ref{Alg:Weak_compliance},
		if the archive is different at two different timesteps, $t$ and $t+i$,
		then for any timestep after $t+i$,
		the archive is always different from the archive at timestep $t$:
		$\forall t, i \in\mathbb{N}^{+}, A^{(t)} \neq A^{(t+i)}   \implies \forall j \in \mathbb{N}^{+}, A^{(t)} \neq A^{(t+i+j)}$.
		\label{lemma:2}
	\end{lemma}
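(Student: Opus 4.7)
The plan is a proof by contradiction: assume $A^{(t+i+j)} = A^{(t)}$ for some $j \in \mathbb{N}^+$ and derive a contradiction with $A^{(t)} \neq A^{(t+i)}$. As a first step, Lemma~\ref{lemma:1} gives that $I$ is non-increasing in $t$; together with $I(A^{(t+i+j)}) = I(A^{(t)})$, this forces $I$ to be constant throughout $[t, t+i+j]$. Since Rule~4 strictly decreases $I$ by the very condition under which it fires, and Rules~1 and~3 leave the archive unchanged, every archive change during the interval must come from Rule~2.

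The heart of the argument is a per-step claim: every Rule~2 application at time $\tau$ satisfies $A^{(\tau)} \Better A^{(\tau-1)}$. The failure of Rule~1 at $\tau$ means no $b \in A^{(\tau-1)}$ has $b \preceq s^{(\tau)}$, which both places $s^{(\tau)}$ in $\min(A^{(\tau-1)} \cup \{s^{(\tau)}\}, \prec) = A^{(\tau)}$ and furnishes the strict witness required by Definition~\ref{def:better}. For $A^{(\tau)} \preceq A^{(\tau-1)}$, I would do a short case split on each $b \in A^{(\tau-1)}$: either $b$ is minimal in the union (so $b \in A^{(\tau)}$), or it is strictly dominated by some minimal element of the union, which lies in $A^{(\tau)}$; either way some $a \in A^{(\tau)}$ has $a \preceq b$.

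To close, I would record that $\Better$ is transitive (a one-line check from Definition~\ref{def:better}: if $a \in A$ is not weakly dominated by any element of $B$ and $B \preceq C$, then no $c \in C$ can weakly dominate $a$ either, since otherwise some intermediate $b \in B$ would give $b \preceq c \preceq a$) and irreflexive (since $A \Better A$ would require some $a \in A$ not weakly dominated by $a \in A$, contradicting $a \preceq a$). Because $A^{(t)} \neq A^{(t+i)}$ forces at least one Rule~2 step in $(t, t+i+j]$, chaining the strict $\Better$-improvements from those steps with the equalities produced by Rules~1 and~3 yields $A^{(t+i+j)} \Better A^{(t)}$, contradicting $A^{(t+i+j)} = A^{(t)}$.

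The main obstacle I anticipate is the per-step Better claim: writing it carefully means exploiting the failure of Rule~1 twice (once to put $s^{(\tau)}$ into $A^{(\tau)}$, once to serve as the strict witness) and giving a tidy description of the minimal-element structure of $A^{(\tau-1)} \cup \{s^{(\tau)}\}$. The chaining and the closing irreflexivity argument are then short, and the whole proof never needs to inspect what $I$ actually is beyond its weak-Pareto-compliance and the monotonicity established in Lemma~\ref{lemma:1}.
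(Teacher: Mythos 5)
Your proposal is correct, and while it opens identically to the paper's proof (assume $A^{(t)}=A^{(t+i+j)}$, use Lemma~\ref{lemma:1} to force $I$ constant on the interval, hence exclude Rule~4 and reduce all archive changes to Rule~2), the core of the argument is genuinely different. The paper tracks an individual solution: some $a=s^{(\tau)}$ accepted by Rule~2 in $(t,t+i]$ would have to be expelled again to restore $A^{(t)}$, but Rule~2 only removes a solution when a dominating newcomer is accepted, which would itself have to be expelled, and so on --- an informal infinite-descent on elements. You instead lift the argument to the set level: each Rule~2 step yields $A^{(\tau)}\Better A^{(\tau-1)}$ (the failure of Rule~1 supplies both the membership of $s^{(\tau)}$ in the minimal set and the strict witness of Definition~\ref{def:better}), and since $\Better$ is transitive and irreflexive, chaining at least one strict step with the identity steps gives $A^{(t+i+j)}\Better A^{(t)}$, which is incompatible with equality. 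Your route is arguably tidier: it replaces the paper's ``likewise, this new solution \dots cannot be eliminated'' step (which hides an induction) with a one-shot acyclicity argument, and it reuses the per-step $\Better$ fact that the paper already asserts inside the proof of Lemma~\ref{lemma:1}. The only price is that you must verify transitivity and irreflexivity of $\Better$ explicitly, which you do correctly. Both proofs are valid; yours is somewhat more self-contained and rigorous at the key step.
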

	\begin{proof}
		By contradiction. %
		Assume $A^{(t)} \neq A^{(t+i)}$ and $\exists j \in \mathbb{N}^{+}$ such that $A^{(t)} = A^{(t+i+j)}$.
		Due to $I(A^{(t)}) = I(A^{(t+i+j)})$ and since the $I$ value of the archive never degrades (Lemma~\ref{lemma:1}),
		we have $I(A^{(t)}) = I(A^{(t+i)}) = I(A^{(t+i+j)})$.
		This implies that, from timestep $t$ to $t+i+j$,
		the archiving process never goes through Rule 4,
		since Rule 4 necessarily leads to a better $I$ value.
		
		Consider the archiving process from the timestep $t$ to $t+i$.
		Since $A^{(t)} \neq A^{(t+1)}$ and Rules 1 and 3 do not change the archive,
		the archiving process must go through Rule 2 at least once,
		where it accepts a new solution $a$ that is not dominated by any solution in $A^{(t)}$.
		
		To satisfy our assumption that $A^{(t)} = A^{(t+i+j)}$, the archiving process must eliminate solution $a$ between timestep $t+i$ and $t+i+j$,
		but without accepting any new solution. 
		This must happen at Rule 2 since Rules 1 and 3 will not change the archive.
		But Rule 2 will only eliminate solution $a$ in the archive if it is dominated by the new solution added to the archive. Likewise, this new solution added to the archive cannot be eliminated through Rule 2 without accepting a newer and better solution.
		Therefore, the archiving process cannot remove $a$ without accepting a new solution that dominates it and $A^{(t+i+j)}$ can never go back to $A^{(t)}$, thus the assumption cannot hold.
	\end{proof}
	
	Lemma~\ref{lemma:2} means that Algorithm~\ref{Alg:Weak_compliance} cannot revisit again a previous archive after the archive has changed.
	With that in mind,
	we are in a position to prove that the three limit desirable properties hold for Algorithm~\ref{Alg:Weak_compliance}.
	
	\begin{theorem}
		Algorithm~\ref{Alg:Weak_compliance} is limit-stable:\newline
		\hspace*{\fill} $\exists t, \forall i \in\mathbb{N}^{+}$,
		\mbox{$A^{(t)} = A^{(t+i)}$}.
		\label{theorem:1}
	\end{theorem}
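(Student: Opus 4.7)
The plan is to combine the finiteness of the objective space $Y$ with Lemma~\ref{lemma:2} to show that the archive can change only finitely many times, after which it is necessarily stable. The key observation is that the preliminaries fix $Y \subset \mathbb{R}^d$ as a finite set, and every archive is required to satisfy $|A^{(t)}| \leq N$. Hence the collection of possible archive states, namely subsets of $Y$ of cardinality at most $N$, is itself a finite set, call it $\mathcal{A}$.

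The proof would then proceed in two short steps. First, I would invoke Lemma~\ref{lemma:2} to argue that each particular state $A \in \mathcal{A}$ can occur in the sequence $(A^{(t)})_{t \geq 1}$ only during a single contiguous block of timesteps: once the archiver moves away from $A$, the lemma forbids it from ever returning to $A$. Second, I would observe that a sequence whose values lie in a finite set $\mathcal{A}$ and in which each value occupies at most one contiguous block can contain only finitely many distinct blocks, so its final block must be infinite. Choosing $t$ to be the first timestep of this final block yields $A^{(t)} = A^{(t+i)}$ for every $i \in \mathbb{N}^{+}$, which is exactly the \emph{limit-stable} property in Property~\ref{def:limit-stable}.

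There is essentially no analytical obstacle here; the whole argument is combinatorial and rests entirely on the no-revisit statement established in Lemma~\ref{lemma:2}. The only thing that requires a little care is to phrase the second step precisely: one should note that the number of blocks equals the number of distinct values that actually appear in the sequence, which is bounded by $|\mathcal{A}|$, and that if the number of blocks were finite but each block were finite, the sequence would terminate, contradicting the fact that it is defined for all $t \in \mathbb{N}^{+}$. Hence at least one block, and therefore the last one, must be infinite, which gives the desired $t$. Note that this proof does not actually use Lemma~\ref{lemma:1} directly; Lemma~\ref{lemma:1} was only needed as an ingredient inside Lemma~\ref{lemma:2}, where the non-degradation of $I$ was used to rule out a return to an earlier archive.
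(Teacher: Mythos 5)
Your proposal is correct and rests on exactly the same two ingredients as the paper's proof: the finiteness of the set of possible archives (since $Y$ is finite and $\abs{A^{(t)}}\leq N$) and the no-revisit property of Lemma~\ref{lemma:2}. The paper phrases it as a proof by contradiction (a non-converging archive would force infinitely many distinct archives, impossible), while you argue directly via contiguous blocks, but this is only a cosmetic reformulation of the same argument.
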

	\begin{proof}
		By contradiction.
		Assume the archive never converges,
		i.e. $\forall t, \exists i\in\mathbb{N}^{+}$, $A^{(t)} \neq A^{(t+i)}$.
		This implies that there are an infinite number of different archives since none can be revisited (Lemma~\ref{lemma:2}).
		However, since input solutions are drawn from the finite set $Y$,
		there must be a finite number of different archives,
		thus a contradiction.
	\end{proof}
	
	\begin{theorem}
		Algorithm~\ref{Alg:Weak_compliance} is limit-Pareto-subset:\newline
		\hspace*{\fill}$\exists t, \forall i\in\mathbb{N}^{+}$, \mbox{$A^{(t)} = A^{(t+i)}$} and \mbox{$\min (A^{(t)}, \prec) \subseteq Y^{*}$},
		where \mbox{$Y^{*} = \min(Y,\prec)$}.
		\label{theorem:2}
	\end{theorem}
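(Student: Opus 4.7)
The plan is to reduce the theorem to Theorem~\ref{theorem:1} plus a short contradiction argument. Theorem~\ref{theorem:1} already supplies a timestep $t$ beyond which the archive is frozen, so the remaining task is to argue that the converged archive $A^{(t)}$ satisfies $\min(A^{(t)},\prec) \subseteq Y^{\ast}$.

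First I would establish an auxiliary invariant: every archive produced by Algorithm~\ref{Alg:Weak_compliance} is itself a nondominated set, so $\min(A^{(t)},\prec) = A^{(t)}$. A short induction on $t$ handles it. Rules~1 and~3 leave the archive untouched. Rule~2 explicitly returns $\min(A^{(t-1)} \cup \{s^{(t)}\},\prec)$, which is nondominated by construction. For Rule~4, observe that whenever we reach it the preceding failures of Rules~1 and~2 force $A^{(t-1)} \cup \{s^{(t)}\}$ to already be mutually nondominated (otherwise $|\min(A^{(t-1)} \cup \{s^{(t)}\},\prec)| \leq |A^{(t-1)}| \leq N$ and Rule~2 would have triggered), so the subset returned by Rule~4 inherits non-dominance.

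Next I would argue by contradiction. Suppose some $a \in A^{(t)}$ is not Pareto-optimal in $Y$, so some $y^{\ast} \in Y$ satisfies $y^{\ast} \prec a$. Invoking the standing limit-regime assumption that every element of $Y$ appears in the input sequence infinitely often, I pick a timestep $t' > t$ with $s^{(t')} = y^{\ast}$, so that $A^{(t'-1)} = A^{(t)}$. Now I walk through the rules on input $y^{\ast}$: Rule~1 cannot apply, because any $a' \in A^{(t)}$ with $a' \preceq y^{\ast}$ would give $a' \prec a$, contradicting the invariant; and Rule~2 must apply, because $y^{\ast}$ dominates $a$, forcing $|\min(A^{(t)} \cup \{y^{\ast}\},\prec)| \leq |A^{(t)}| \leq N$. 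The updated archive contains $y^{\ast}$ and excludes $a$, hence differs from $A^{(t)}$, contradicting convergence. Therefore $A^{(t)} \subseteq Y^{\ast}$.

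The one delicate ingredient is the non-dominance invariant: without it, Rule~1's inapplicability would not translate into a contradiction through $a' \prec a$. It is worth noting that weak Pareto compliance of $I$ is not used directly here; it only enters indirectly via Lemma~\ref{lemma:1} and Theorem~\ref{theorem:1}, which are what secure convergence in the first place.
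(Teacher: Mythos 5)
Your proposal is correct and follows essentially the same route as the paper's proof: invoke Theorem~\ref{theorem:1} for convergence, establish that the archive is always a nondominated set, and derive a contradiction by feeding a dominating solution to the frozen archive and showing Rule~2 must accept it. Your treatment of the nondominance invariant (in particular the Rule~4 case) is somewhat more explicit than the paper's one-line justification, but this is an elaboration of the same argument rather than a different approach.
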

	\begin{proof}
		According to Theorem~\ref{theorem:1}, we have
		\mbox{$\exists t, \forall i \in\mathbb{N}^{+}$}, \mbox{$A^{(t)} = A^{(t+i)}$}.
		Moreover, the archive is always a nondominated set, i.e.,
		$A^{(t)} = \min (A^{(t)}, \prec)$, because Rule 1 prevents adding a new
		solution that is weakly dominated by any solution in the archive and Rule 2
		removes archived solutions that are dominated by the new solution. Thus, we
		only need to prove $A^{(t)} \subseteq Y^{*}$.
		
		By contradiction: Let us assume $A^{(t)} \nsubseteq Y^*$.
		As $Y^*$ is the set of all the Pareto-optimal solutions of $Y$,
		there exists at least one solution in $A^{(t)}$ that is dominated by at least one solution $y^* \in Y^*$.
		Since all solutions have a non-zero probability of being generated in a future timestep, then $\exists i \in\mathbb{N}^{+}$, such that the archiver receives $s^{(t+i)} =y^*$. 
		Then,
		the algorithm must go to Rule~2 since there is no solution in $A^{(t+i-1)}$
		weakly dominating $y^*$, which is Pareto optimal.
		Since we assumed that there is at least one solution in $A^{(t)} = A^{(t+i-1)}$ dominated by $y^*$, then $\abs{\min(A^{(t+i-1)} \cup \{y^*\},\prec)}\leq N$ and Rule~2 will accept $y^*$, which implies that 
		$A^{(t)} \neq A^{(t+i)}$, thus contradicting Theorem~\ref{theorem:1}.
	\end{proof}
	
	\begin{theorem}
		Algorithm~\ref{Alg:Weak_compliance} is limit-optimal:\newline
		\hspace*{\fill}$\exists t, \forall i \in\mathbb{N}^{+}$, $A^{(t)} = A^{(t+i)}$ and $\min(A^{(t)}, \prec)$ is an optimal approximation of size $N$ of $Y^*$, where $Y^* = \min(Y, \prec)$ and
		$N$ is the capacity of the archive.
		\label{theorem:3}
	\end{theorem}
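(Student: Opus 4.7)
The plan is to build directly on Theorems~\ref{theorem:1} and~\ref{theorem:2}: Theorem~\ref{theorem:1} already supplies the stabilisation timestep $t$ with $A^{(t)} = A^{(t+i)}$ for all $i \in \mathbb{N}^{+}$, and Theorem~\ref{theorem:2} tells us that this stable archive is a nondominated subset of $Y^{*}$. Thus, it remains only to show that $A^{(t)}$ is an optimal approximation of size $N$ of $Y^{*}$, i.e., that no $B \subseteq Y$ with $\abs{B} \leq N$ satisfies $B \Better A^{(t)}$. I would argue by contradiction, splitting on whether the stable archive is full.

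For the full-archive case $\abs{A^{(t)}} = N$, suppose such a $B$ exists. The relation $B \Better A^{(t)}$ requires that for every $a \in A^{(t)}$ there is $b \in B$ with $b \preceq a$. Because $a \in Y^{*}$ cannot be strictly dominated by any element of $Y$, this forces $b = a$, so $A^{(t)} \subseteq B$. Combined with $\abs{A^{(t)}} = N \geq \abs{B}$, we get $A^{(t)} = B$, contradicting $B \Better A^{(t)}$. Notice that this sub-case does not invoke the indicator at all; it is purely an order-theoretic consequence of $A^{(t)} \subseteq Y^{*}$.

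For the non-full case $\abs{A^{(t)}} < N$, the first step is to show $A^{(t)} = Y^{*}$. If not, pick any $y^{*} \in Y^{*} \setminus A^{(t)}$; reusing the non-zero generation probability assumption invoked in Theorem~\ref{theorem:2}, $y^{*}$ appears as some $s^{(t+i)}$. Both $y^{*}$ and every element of $A^{(t)}$ lie in $Y^{*}$, so no element of $A^{(t)}$ weakly dominates $y^{*}$ (Rule~1 fails), and $y^{*}$ neither dominates nor is dominated by any element of $A^{(t)}$; hence $\abs{\min(A^{(t)} \cup \{y^{*}\},\prec)} = \abs{A^{(t)}} + 1 \leq N$, so Rule~2 fires and modifies the archive, contradicting stability. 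Having established $A^{(t)} = Y^{*}$, the same Pareto-optimality argument shows that $Y^{*}$ is itself optimal: any $B \Better Y^{*}$ forces $Y^{*} \subseteq B$, while the distinguishing element $b' \in B$ must avoid being weakly dominated by any $y^{*} \in Y^{*}$; this rules out $b' \in Y^{*}$ (self-weak-dominance) and, since $Y$ is finite, also $b' \in Y \setminus Y^{*}$ (some $y^{*} \in Y^{*}$ strictly dominates $b'$), a contradiction.

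The hard part, I expect, is less algebraic than structural: keeping the case analysis airtight and verifying that Rule~2 truly fires in the non-full case — specifically that $\abs{\min(A^{(t)} \cup \{y^{*}\},\prec)}$ equals $\abs{A^{(t)}}+1$ rather than something larger, which depends crucially on $A^{(t)} \cup \{y^{*}\} \subseteq Y^{*}$ being nondominated. Interestingly, the indicator $I$ never enters this argument explicitly: its weak Pareto compliance is used only through Theorems~\ref{theorem:1} and~\ref{theorem:2}, which is precisely why the stronger \emph{limit-optimal} property becomes attainable with a merely weakly Pareto-compliant indicator.
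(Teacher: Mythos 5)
Your proposal is correct and follows essentially the same route as the paper's proof: invoke Theorems~\ref{theorem:1} and~\ref{theorem:2}, then argue by cases on the size of the stable archive, using a cardinality contradiction when $\abs{A^{(t)}}=N$ and a ``Rule~2 would fire'' contradiction when the archive is not full. Your two-case split is a slightly cleaner reorganisation of the paper's three cases (it merges the $\abs{A^{(t)}}=\abs{Y^{*}}$ and $\abs{A^{(t)}}<\min\{N,\abs{Y^{*}}\}$ cases into the single claim $A^{(t)}=Y^{*}$ and adds the explicit verification that $Y^{*}$ itself admits no better bounded set), but the substance is the same.
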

	\begin{proof}
		According to Theorem~\ref{theorem:1},
		we have $\exists t, \forall i \in\mathbb{N}^{+}$, $A^{(t)} = A^{(t+i)}$.
		Its proof also shows that \mbox{$A^{(t)} = \min (A^{(t)}, \prec)$}. 
		Thus, we need to prove that $A^{(t)}$ is an optimal approximation of bounded size (Definition~\ref{Def:Optimal_approximation}), i.e., 
		$\nexists B \subseteq Y$, $\abs{B}\leq N$ such that $B \Better A^{(t)}$.

		It is easy to see that $\abs{A^{(t)}} \leq \min\{N,\abs{Y^{*}}\}$
		because $N$ is the capacity of the archive and $A^{(t)} \subseteq Y^{*}$ (Theorem~\ref{theorem:2}). 
		Now we prove the theorem by considering three cases: 
		(i) $\abs{A^{(t)}} = \abs{Y^{*}}$, 
		(ii) $\abs{A^{(t)}} = N$,
		and (iii) $\abs{A^{(t)}} < \min \{N,\abs{Y^{*}}\}$.
		
		Let us first consider the case $\abs{A^{(t)}} = \abs{Y^{*}}$.
		Rule 1 in Algorithm~\ref{Alg:Weak_compliance} forbids duplicated solutions in $A^{(t)}$. Thus, $\abs{A^{(t)}} = \abs{Y^{*}}$ implies $A^{(t)} = Y^{*}$ (Theorem~\ref{theorem:2}) and the archive is optimal (it contains the complete Pareto front).
		Thus, $\nexists B \subseteq Y$ such that $B \Better A^{(t)}$.
		
		Let us now consider the case $\abs{A^{(t)}} = N$.
		According to Rule~1 and $A^{(t)} \subseteq Y^{*}$ (Theorem~\ref{theorem:2}), we know that all solutions in $A^{(t)}$ are unique elements of $Y^{*}$. Assume that $\exists B \subseteq Y$, $\abs{B} \leq N$, such that $B \Better A^{(t)}$, which implies $B \preceq A^{(t)} \land A^{(t)} \npreceq B$. 
		Since $B \preceq A^{(t)}$, $B$ should contain all solutions in $A^{(t)}$ as they are unique elements of $Y^{*}$, i.e., $A^{(t)} \subseteq B$. 
		In addition, $A^{(t)} \npreceq B$ implies that $A^{(t)} \neq B$, thus $A^{(t)} \subset B$ 
		and $\abs{B}>N = \abs{A^{(t)}}$, a contradiction with $\abs{B} \leq N$, thus there is not such $B \Better A^{(t)}$.
		

		Lastly, let us consider the case  $\abs{A^{(t)}} < \min \{N,\abs{Y^{*}}\}$. $\abs{A^{(t)}} < \min \{N,\abs{Y^{*}}\}$ implies that $A^{(t)}$ is missing at least one solution from $Y^{*}$. If the missed solution(s) are already duplicated in $A^{(t)}$, the archive is optimal. 
		Let us assume one of the missed solutions is not duplicated; then Rule~2 will accept the solution because $\abs{A^{(t)}} <  N$, contradicting the initial assumption that $A^{(t)} = A^{(t+i)}$. 
		Therefore, when $\abs{A^{(t)}} < \min \{N,\abs{Y^{*}}\}$, 
		$A^{(t)}$ must consist of all unique elements of $Y^{*}$, thus
		$\nexists B \subseteq Y$ such that $B \Better A^{(t)}$.
		%
	\end{proof}

	In summary, an archiver based on a weakly Pareto-compliant indicator can respect the three limit properties.
	In addition,
	Algorithm~\ref{Alg:Weak_compliance} also respects the \textit{set-monotone}  property (Property~\ref{prop:set_monotone});
	the proof is straightforward given the definition of a weakly Pareto-compliant indicator.
	The overall conclusion is that archivers based on a weakly Pareto-compliant indicator
	can hold
	the same theoretical desirables as archivers based on a Pareto-compliant indicator.
	This conclusion may explain recent empirical observations~\cite{FalZapGar2021gecco}
	showing no significant difference between MOEAs guided by either weakly Pareto-compliant indicators or Pareto-compliant indicators.
	These theoretical and empirical results should encourage the study of archivers based on weakly Pareto-compliant indicators, 
	since many indicators meet the condition of being weakly Pareto compliant, including $\epsilon$-indicator~\cite{ZitThiLauFon2003:tec}, \IGDplus~\cite{IshMasTanNoj2015igd},
	R2~\cite{HanJas1998}, PCI~\cite{LiYanLiu2015pci}, IPF~\cite{BozFowGelKim2010or} and others~\cite{LiYao2017arxiv,CaiXiaLiHu2021grid,ValMarWanDeb2021mipdom,CaiXiaLiSun2021kernel}.


	\begin{table*}[htbp]
		\newcommand{\propcell}[1]{\multirow[b]{3}{=}{\centering\textit{#1}}}
		\newcommand{\cell}[2]{\multicolumn{1}{#1}{#2}}
		\caption{Classification of representative archiving algorithms and their theoretical and practical desirables.
		}\label{Table:classifcation}
		\footnotesize%
		\centering%
		\def\arraystretch{1.2}%
		\begin{tabular}{@{}c|@{}C{9em}@{}|@{}C{1cm}@{}|@{}C{1.2cm}@{}|@{}C{1.2cm}@{}|@{}C{1cm}@{}|@{}C{1cm}@{}|@{}C{1cm}@{}|@{}C{1.3cm}@{}|@{}C{1.5cm}@{}|@{}C{1.4cm}@{}|@{}C{\widthof{No problem-}+.5ex}@{}|}
			\cline{3-12}
			\cell{c}{}		& & \multicolumn{6}{c|}{Theoretical desirables} &  \multicolumn{4}{c|}{Practical desirables}  \\\cline{3-12}
			\cell{c}{}           &
			& \propcell{Pareto subset} & \propcell{Point-monotone} & \propcell{Set-monotone}  & \propcell{Limit-stable} & \propcell{Limit-Pareto subset}  & \propcell{Limit-optimal}  & \propcell{Diversifies}   & \propcell{Controllable size} & \propcell{Polynomial time}  &  \propcell{No problem-specific parameter}
			\\
			\cell{c}{}&&&&&&&&&&&\\\cline{1-2}
			Class & Archiver &&&&&&&&&& \\\hline
			\multirow{3}{*}{I} & NSGA-II~\cite{Deb02nsga2} &  &  &  &  &  &  & + & + & + & + \\
			& SPEA2~\cite{ZitLauThie2002spea2} &  &  &  &  &  &  & + & + & + & +\\
			& NSGA-III~\cite{DebJain2014:nsga3-part1} &  &  &  &  &  &  & + & + & + & +\\\hline
			\multirow{3}{*}{II} & \Adom~\cite{RudAga2000cec} &  & + & + & +  & + & + & & + & + & + \\
			& $\epsilon$-approx~\cite{LTDZ2002b} & & + & + & + &  &  & + &  & + & \\
			& $\epsilon$-Pareto~\cite{LTDZ2002b} & + & + & + & + & + &  & + &  & + & \\\hline


			\multirow{3}{*}{III} & MOEA/D-PBI~\cite{ZhaLi07:moead} &  &  &  & + &  &  & + & + & + & +\\
			& MOEA/D-TCH~\cite{ZhaLi07:moead} & & -- & -- & + & + &  & + & + & + & + \\
			& $\ARtwo^*$~\cite{BroWagTrau2015r2} & & -- & -- & + & + &  & + & + & + & + \\\hline

			\multirow{3}{*}{IV}	& \AHV~\cite{KnoCor2003tec} & & & + & +  & + & + & + & + & &  \\
			& SMS-EMOA~\cite{BeuNauEmm2007ejor} &  &  & -- & --  & -- & --  & + & + &  & + \\
			& MGA~\cite{LauZen2011ejor} & & & + & +  & + & + & + & + & + & +\\\hline
		\end{tabular}
		\vskip 1ex
		\parbox{\linewidth}{%
			\footnotesize $^*$Here \ARtwo is slightly different from~\cite{BroWagTrau2015r2}, in which the new solution will be rejected if it has the same lowest fitness as the old ones (see Algorithm~\ref{Alg:IBEA}). \\
			``+'' indicates that the archiver can fully respect the specified desirable
			and ``--'' indicates that the archiver can respect the desirable under certain condition.}
	\end{table*}

	
	\section{Classification of Existing Archivers}\label{sec:classification}
	
	In this section,
	we review existing archivers in the literature
	on the basis of the theoretical properties in the previous section
	Apart from those theoretical desirables,
	there may also exist practical desirables for archivers to respect.
	
	For example,
	one may wish that (1) an archiver \emph{diversifies}, i.e., avoids convergence to a small region of the Pareto front;
	(2) the size of its archive is controllable, not only respecting any user-defined maximum capacity, but being as full of nondominated solutions as possible;
	(3) the archiving operation does not take too much time,
	e.g., not exponentially increasing with the number of objectives;
	and (4) the archiving process does not need any problem-dependent parameter set by the user.
	Non-diversifying archivers include efficiency preserving archivers~\cite{Hanne1999ejor} that, when full, only accept solutions that dominate an archived solution, thus they often converge to one or few small regions of the Pareto front. Non-efficiency preserving archivers may also fail to diversify; for example, an archiver that removes the solution farthest away from an ideal point or an archiver that select solutions according to their distance to reference vectors, if the vectors used are not well-distributed along the Pareto front.

	According to these properties, 
	archivers can be categorised into four classes.
	The first class (I) refers to archiving algorithms (or selection criteria)
	that do not hold any theoretical desirables.
	Archivers in many well-established MOEAs belong to this class.
	The second class (II) refers to those that never deteriorate (i.e., hold the \emph{point-monotone} property) but are not very useful in practice due to failing to diversify or not using their full capacity to store nondominated solutions.
	The third class (III) refers to those that have some good theoretical and practical properties, but are not \emph{limit-optimal}.
	The fourth class (IV) refers to those that, in addition to having have  good theoretical and practical properties, are also \emph{limit-optimal} (under certain conditions).
	
	Table~\ref{Table:classifcation} shows several representative archivers in the four classes
	and their theoretical and practical properties.
	For some classes,
	there are numerous archivers in the area (such as Class I and III)
	and we only consider representative algorithms.
	For classes where there are very few archivers (such as Class IV),
	we aim to list them completely provided that they are significantly different in terms of the archiving criteria used.
	
	\subsection{Class I: Archivers Holding No Theoretical Properties}

	\begin{algorithm}[tb]
		\caption{Archiver based on NSGA-II's selection rules.}
		\label{Alg:NSGA-II}
		\small%
		\SetAlgoLined
		\DontPrintSemicolon
		\KwIn{$A^{(t-1)}$, $s^{(t)}$}
		\tcp{Partition all the solutions into different nondominated fronts}
		\tcp{and identify the last front $F_l$.}
		$(F_1,F_2,\dots,F_l) \assign \text{nondom\_sorting}(A^{(t-1)} \cup s^{(t)})$\;
		\tcp{Find solution in $F_l$ with the minimum crowding distance.}
		$a \assign \argmin_{a\in F_l} \text{crowding\_distance}(F_l)$\;
		$A^{(t)} \assign  A^{(t-1)} \cup \{s^{(t)}\} \setminus \{a\}$\;
		\KwOut{$A^{(t)}$}
	\end{algorithm}

	The first class contains archivers from many well-established MOEAs that
	do not hold any theoretical desirables.
	It includes all Pareto-based (Pareto dominance $+$ density) algorithms  
	and some archivers in other types of algorithms (e.g., decomposition-based ones).
	
	In general,
	the archiving procedure (i.e., environmental selection procedure)
	in Pareto-based MOEAs,
	such as NSGA-II~\cite{Deb02nsga2} and SPEA2~\cite{ZitLauThie2002spea2},
	consists of two steps:
	considering Pareto dominance first and then solutions' density.
	As an example, in NSGA-II, first
	a nondominated sorting procedure divides the archive into different nondominated fronts,
	and then a density metric (crowding distance)
	is used to select among solutions in the last front;
	Algorithm~\ref{Alg:NSGA-II} gives the procedure of an archiver based on NSGA-II's selection rules.
	In such archivers,
	the density-based rules are the cause of set-deterioration\footnote{Set-deterioration (see Prop.~\ref{prop:set_monotone}) implies point-deterioration (Prop.~\ref{prop:point_monotone}).} because they may eliminate a nondominated, or even Pareto-optimal, solution that may end up dominating another solution later accepted. 
	As shown in Figure~\ref{fig:Pareto_deterioration} previously,
	an inferior solution can enter the archive
	provided that it is located in a sparser region and there is no solution in the current archive dominating it.
	
	Set-deterioration not only arises with density-based rules
	but also with the decomposition-based rules in NSGA-III~\cite{DebJain2014:nsga3-part1}
	and the rules that combine solutions' density with proximity to the Pareto front in SDE~\cite{LiYanLiu2014shift}.
	Figure~\ref{fig:NSGA-III} illustrates how NSGA-III's archiving rules produce set-deterioration.
	The rules in NSGA-III first consider the Pareto dominance relation between solutions
	and, if they are nondominated, then compare their closeness to the weight vectors.
	As can be seen in Figure~\ref{fig:NSGA-III},
	the archive eliminates solution $a$, but after two timesteps it accepts solution $c$, which is dominated by $a$, thus the archive at $t+2$ is worse than at $t$.

	\begin{figure}[tbp]
		\centering%
		\footnotesize%
		\includegraphics[scale=0.45]{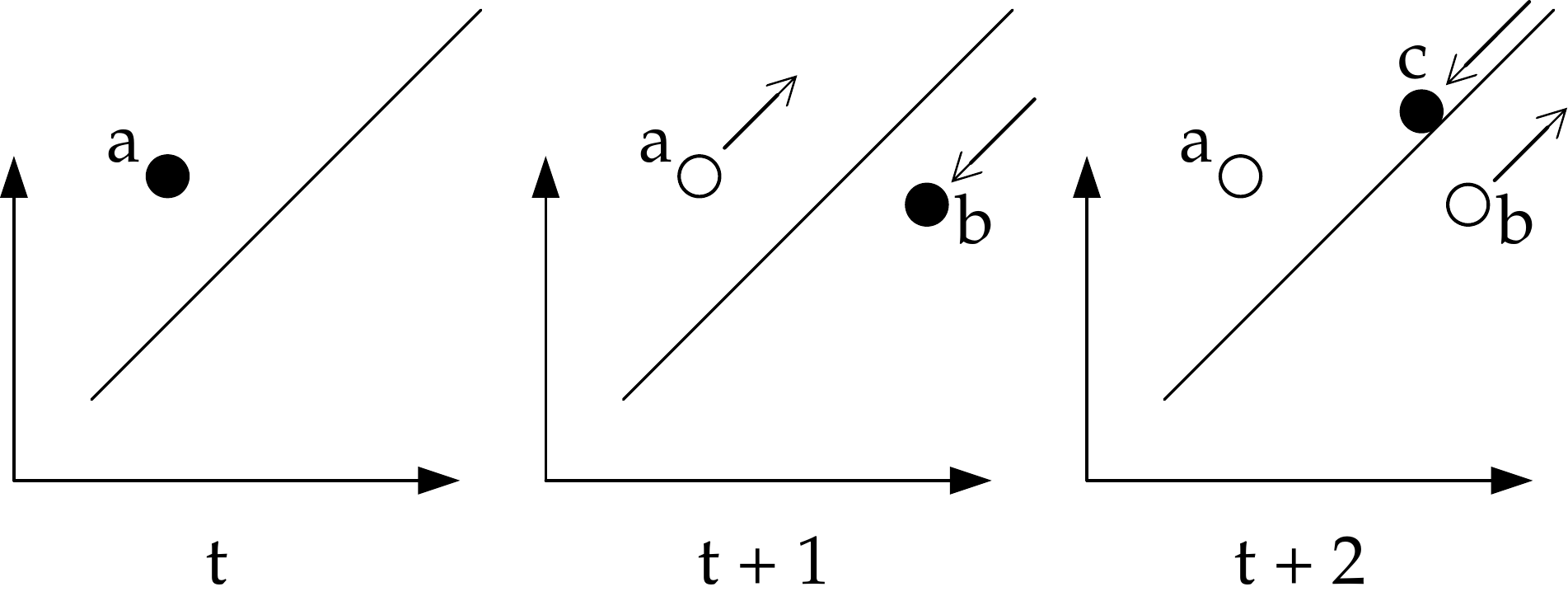}
		\caption{Illustration of deterioration when using NSGA-III's archiving rules (adapted from~\cite{Fie2017gecco}). Black circles denote solutions in the archive,
			hollow circles denote solutions removed,
			and the solid line is the weight vector considered.
			At the timestep $t$, the archive contains solution $a$.
			At the timestep \mbox{$t+1$}, solution $b$ replaces $a$ since $b$ is closer than $a$ to the line.
			At the timestep \mbox{$t+2$}, solution $c$ replaces $b$ since $c$ is closer to the line.
			However, $c$ is dominated by $a$ and the archive at $t$ was better ($\Better$) than the one at $t+2$, thus the archive both \emph{set-deteriorates} and \emph{point-deteriorates}.}
		\label{fig:NSGA-III}
	\end{figure}

	\subsection{Class II: Archivers Holding Some Theoretical Properties but not Useful in Practice}\label{sec:class2}
	
	\begin{figure}[tbp]
		\centering%
		\footnotesize%
		\includegraphics[scale=0.5]{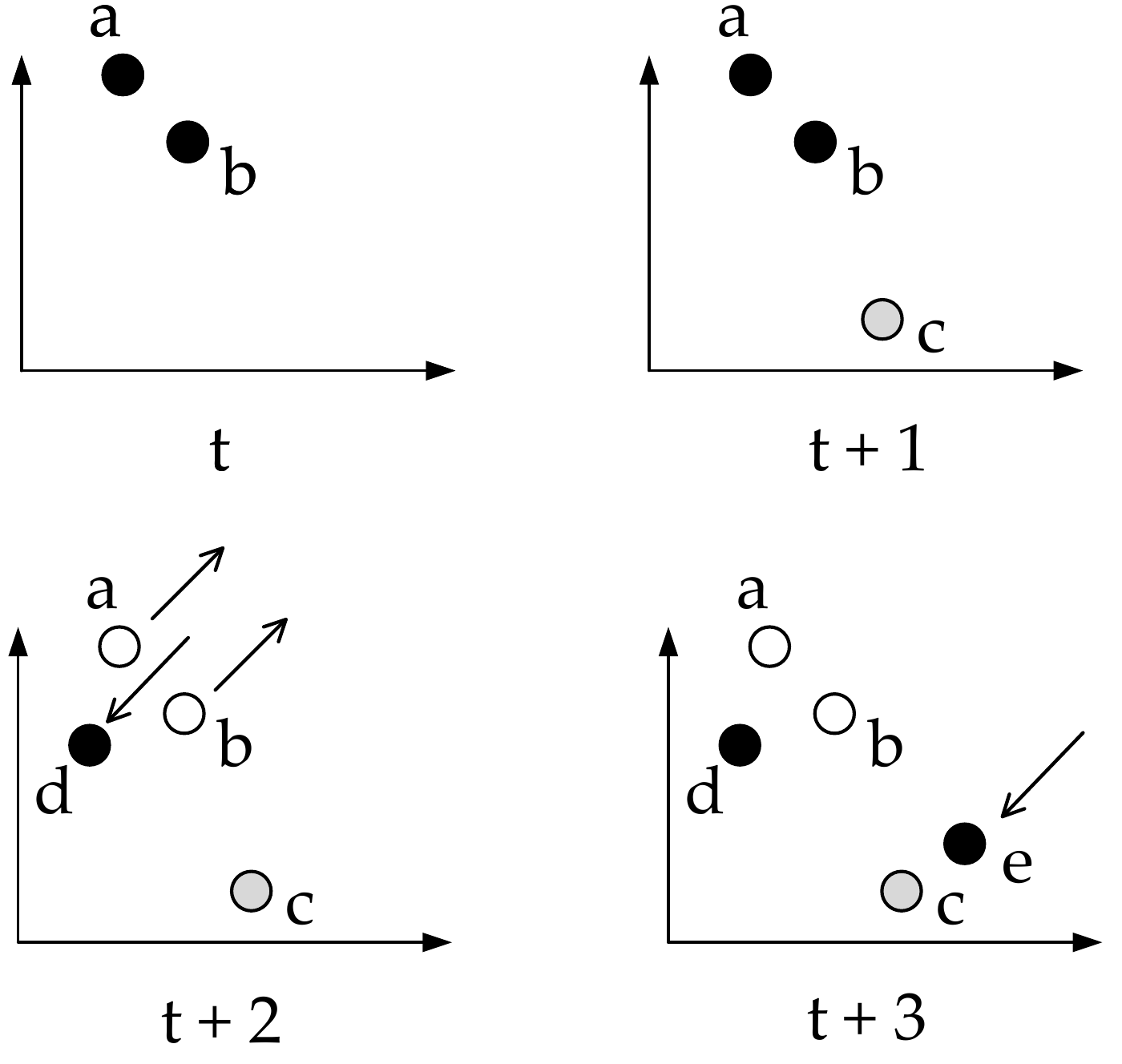}
		\caption{Illustration that the archiver \Adom does not hold the property
			\textit{Pareto-subset} (Property~\ref{prop:Pareto_subset}).
			The capacity of the archive is 2.  Here, black circles denote solutions in the
			archive, hollow circles denote solutions removed from the archive and grey
			circles denote solutions that are not allowed to enter the archive.  At the
			timestep $t+$1, solution $c$ cannot enter the archive because it does not
			dominate any solution in the current archive and the archive is at full
			capacity.  At the timestep $t+$2, solution $d$ enters the archive and
			solutions $a$ and $b$ are removed since they are dominated by $d$.  At the
			timestep $t+$3, solution $e$ enters the archive since it is not dominated by
			$d$ and the archive is not at full capacity.  However, solution $e$ is
			dominated by a solution of the sequence (i.e., $c$), thus the archiver does not
			hold the \textit{Pareto-subset} property.}\label{fig:dominating}
	\end{figure}
	
	\begin{algorithm}[tb]
		\caption{\Adom Archiver}
		\label{Alg:Dominating}
		\small%
		\DontPrintSemicolon
		\SetAlgoLined
		\KwIn{$A^{(t-1)}$, $s^{(t)}$}
		\uIf{$\exists a \in A^{(t-1)}$, $a \preceq s^{(t)}$}{
			$A^{(t)} \assign A^{(t-1)}$
		}
		\uElseIf{$\abs{\min(A^{(t-1)} \cup \{s^{(t)}\},\prec)} \leq N$}{
			$A^{(t)} \leftarrow \min(A^{(t-1)} \cup \{s^{(t)}\},\prec)$
		}\Else{
			$A^{(t)} \assign A^{(t-1)}$
		}
		\KwOut{$A^{(t)}$}
	\end{algorithm}

	This class comprises archivers holding some theoretical desirables
	but not being very useful in practice.
	It includes (1) archivers that do not diversify,
	e.g., the \emph{dominating} archiver (\Adom)
	found in the AR1 algorithm~\cite{RudAga2000cec}, which when full only accepts a new solution if it dominates an archived one,
	and (2) archivers that do not ``use'' their full capacity to store nondominated solutions,
	e.g., $\epsilon$-approx and $\epsilon$-Pareto~\cite{LTDZ2002b}
	whose archive size cannot be controlled if the parameter $\epsilon$ is pre-defined, and adapting $\epsilon$ to bound the maximum size tends to archive too few solutions~\cite{KnoCor2004lnems}, thus they are not limit-optimal.
	

	As the earliest archiver that holds theoretical desirables,
	\Adom is set to only accept a new solution if it dominates some solution in the archive
	(at the time when the archive is at full capacity).
	Algorithm~\ref{Alg:Dominating} gives the procedure of \Adom.
	Since a solution in the archive cannot be removed unless a dominating one arrives,
	the archiver respects the property \textit{point-monotone}, which implies \emph{set-monotone}, and the property \emph{limit-optimal}, which implies all the other limit properties.
	Yet,
	\Adom does not respect the property \textit{Pareto-subset}.
	An example is given in Figure~\ref{fig:dominating},
	where the archive rejects solution $c$
	but later accepts a solution (i.e., solution $e$) dominated by $c$.
	This is because when $c$ arrives,
	the archive is full,
	but after several timesteps when $e$ arrives,
	there is a slot available in the archive.
	
	

	Another type of archivers in this class are those that are not able to control its archive size: they either cannot respect a fixed maximum capacity $N$ or archive too few nondominated solutions.
	In such archivers,
	the objective space is divided into regions and only one solution can be accepted within each region.
	Such a region can be a hyper-box~\cite{LTDZ2002b,DebMohMis2005epsilon,SchLauCoeDelTal2008,LuoBos2012elitist,HerSch2022archive} (defined by a parameter $\epsilon$ or its variant~\cite{JinWon2010adapt}).
	Since the superiority relation between nondominated solutions in such archivers is not changed
	(old one always being regarded better than new one if they are in the same region),
	they can hold many theoretical desirables
	including \textit{point-monotone}\footnote{In \cite{LopKnoLau2011emo},
		the archiver $\epsilon$-approx~\cite{LTDZ2002b} was said not to hold the \textit{point-monotone} property, which is not the case. A corrected version is available as a technical report~\cite{IRIDIA-2011-001}.},
	\textit{set-monotone} and \textit{limit-stable}.
	They can also be extended to hold the \textit{Pareto-subset} and \textit{limit-Pareto-subset} properties,
	provided that Pareto dominance is considered in the archiving update.
	However,
	they do not respect the property \textit{limit-optimal}
	as a nondominated solution may not be allowed to enter the archive even if the archive is not full, because there is already one solution in the same region.
	
	\begin{algorithm}[tb]
		\caption{$\epsilon$-Pareto Archiver}
		\label{Alg:epsilon_Pareto}
		\small%
		\DontPrintSemicolon
		\SetAlgoLined
		\KwIn{$A^{(t-1)}$, $s^{(t)}$, $\epsilon$}
		\tcc{$box(a)$ is the box index vector of $a$ that discretises the space into boxes based on $\epsilon$,
			where $box_i(a) = \floor*{\frac{\log a_i}{\log (1+\epsilon)}}$ for $i=1,\dots,d$
			and $d$ is the number of objectives.}
		$D \assign \{a \in A^{(t-1)} \mid box(s^{(t)}) \prec box(a)\}$\;
		\uIf{$D \neq \emptyset$}
		{
			$A^{(t)} \assign A^{(t-1)} \cup \{s^{(t)}\} \setminus D$
		}
		\uElseIf{$\exists a\in A^{(t-1)}, box(a) = box(s^{(t)}) \wedge s^{(t)} \prec a$}
		{
			$A^{(t)} \assign A^{(t-1)} \cup \{s^{(t)}\} \setminus \{a\}$
		}
		\uElseIf{$\nexists a\in A^{(t-1)}, box(a) \preceq box(s^{(t)})$}
		{
			$A^{(t)} \assign A^{(t-1)} \cup \{s^{(t)}\}$
		}
		\Else{$A^{(t)} \assign A^{(t-1)}$}
		\KwOut{$A^{(t)}$}
	\end{algorithm}
	
	One of the most representative archivers in this class is $\epsilon$-Pareto~\cite{LTDZ2002b}.
	It is the only known archiver that guarantees storing a subset of the Pareto-optimal solutions seen so far (\emph{Pareto-subset}), while also capable of  diversifying.
	Algorithm~\ref{Alg:epsilon_Pareto} gives the procedure of $\epsilon$-Pareto.
	A new solution is accepted if it meets one of the three conditions:
	(1) it $box$-dominates some solution in the archive (lines~1--3),
	(2) it is located in the same box as another solution but dominates the latter (lines~4--5),
	or (3) there is no any other occupied box weakly dominating it (lines~6--7).
	
	A practical weakness of $\epsilon$-Pareto is that the size of the archive is not controllable
	(despite bounded~\cite{PapYan2000focs}),
	but determined by the interplay between the value of $\epsilon$,
	the optimisation problem and the search algorithm.
	Even if $\epsilon$ is adapted such that the archive size never surpasses a given capacity, the actual number of solutions at the end of the archiving process are often much fewer than this capacity~\cite{KnoCor2004lnems}. In some cases, it is not even possible to find a value of $\epsilon$ such that the size of the archive approximates the given capacity~\cite{LiYanLiuShe2013many}. Since an optimal set of size close to the user-specified maximum capacity is of primary interest, an archive of uncontrollable size is not very practical.

	\subsection{Class III: Archivers Holding Theoretical Properties and Being of Practical Use, but not Limit-Optimal}\label{sec:classIII}
	
	This class includes archivers that perform well in practice
	and also hold some theoretical properties (under certain conditions).
	They can further be divided into two types of methods,
	decomposition-based methods and indicator-based methods.
	
	The decomposition-based archiving methods,
	represented by MOEA/D~\cite{ZhaLi07:moead},
	decompose the original multi-objective problem
	into a number of single-objective subproblems
	through a set of weight vectors and a scalarising function.
	Algorithm~\ref{Alg:MOEAD} gives the procedure of the archiving algorithm based on MOEA/D, which follows a rather different template than other archivers. 
	The archiver in MOEA/D manipulates a set of $N$ weights (rather than $N$ solutions),
	and each weight is associated with a solution, which has the best value on that weight.
	Since a solution may associate with multiple weights,
	the total number of unique nondominated solutions in the archive may be significantly less than $N$,
	particularly on problems with irregular Pareto fronts~\cite{TriSriSanGho2016survey}.
	\begin{algorithm}[tb]
		\caption{Archiver based on MOEA/D's selection rules}
		\label{Alg:MOEAD}
		\small
		\DontPrintSemicolon
		\SetAlgoLined
		\KwIn{$A^{(t-1)}$, $s^{(t)}$, $W=\{w_1,w_2,\dots, w_N\}$ (set of weights), 
			$r$ (reference point)}
		\tcp{$A^{(t-1)}$ is the set of solutions associated with each weight,}
		\tcp{i.e., $A^{(t-1)} = \{a^{(t-1)}_{w_1}, \dots, a^{(t-1)}_{w_N}\}$.}
		$r \assign \text{update\_refpoint}(r, s^{(t)})$
		\tcp*{Update the reference point.}
		\ForEach{$w_i \in W$}{
			\uIf{$\textup{Scalarize}(s^{(t)},w_i,r) < \textup{Scalarize}(a^{(t-1)}_{w_i}, w_i,r)$}{
				\tcc{Replace the current solution associated with $w_i$ with the new solution $s^{(t)}$
					if $s^{(t)}$ has better scalar value on $w_i$.}
				$a^{(t)}_{w_i} \assign s^{(t)}$
			}\Else{
				$a^{(t)}_{w_i} \assign a^{(t-1)}_{w_i}$}}
		\KwOut{$A^{(t)}$, $r$}
	\end{algorithm}

	Depending on the scalarising function used (e.g., TCH or PBI~\cite{ZhaLi07:moead}),
	MOEA/D archivers may hold different theoretical properties.
	For example,
	MOEA/D-PBI holds none of the three desirable anytime properties,
	since the PBI scalarising function,
	which is an aggregation of the distance of a solution to the weight vector
	and the distance of its projection on the vector,
	may regard a dominated solution as better than the solution dominating it.
	In contrast,
	the Tchebycheff scalarising function, which is weakly in line with Pareto dominance, i.e., \mbox{$a \prec b \implies \textit{TCH}(a) \leq \textit{TCH}(b)$}, 
	makes MOEA/D-TCH hold the \textit{point-monotone} and \textit{set-monotone} properties conditionally, i.e., as long as the ideal point used for the calculation of the Tchebycheff function does not actually change during the archiving process.
	As for the limit properties,
	since the ideal point can always be settled in the limit sense,
	the two archivers MOEA/D-PBI and MOEA/D-TCH are \textit{limit-stable}
	and the latter is also \textit{limit-Pareto-subset}.
	Yet, they are not \textit{limit-optimal}
	as a nondominated solution may not be able to enter the archive
	even if the archive is not full, because it cannot lead to a better scalarising function value on any weight vector.
	

	Another type of archivers in this class are indicator-based archivers.
	They use a quality indicator to measure the quality of the whole archive,
	such that the quality contribution of a solution is  the difference of the indicator values between the archive with and without the solution.
	Most existing indicator-based archivers belong to this class,
	but not the one presented in the original IBEA~\cite{ZitKun2004ppsn}.
	This is because IBEA does not use an indicator to measure the quality of the whole set, but rather uses an indicator to define a measure
	(e.g., based on the $\epsilon$-indicator) between two solutions,
	hence not holding these theoretical properties.

	\begin{algorithm}[tb]
		\caption{Archiver based on a common indicator-based MOEA's selection rules}
		\label{Alg:IBEA}
		\small
		\SetAlgoLined
		\DontPrintSemicolon
		\KwIn{$A^{(t-1)}$, $s^{(t)}$}
		\tcp{Partition all the solutions into different nondominated fronts and identify the last front $F_l$.}
		$(F_1,F_2,\dots,F_l) \assign \text{nondom\_sorting}(A^{(t-1)} \cup s^{(t)})$\;
		\tcp{Find the solutions whose removal would minimize the indicator value  of the nondominated set $F_l$.}
		$D \assign \argmin_{a \in F_l} I(F_l \setminus \{a\})$\;
		\uIf {$s^{(t)} \in D$}{
			$A^{(t)} \assign A^{(t-1)}$
		}\Else{
			$a' \assign \text{sample}(D)$ \tcp*{Draw a solution randomly from $D$.}
			$A^{(t)} \assign  A^{(t-1)} \cup \{s^{(t)}\} \setminus \{a'\}$\;
		}
		\KwOut{$A^{(t)}$}
	\end{algorithm}
	
	Indicator-based archivers, such as those found in SMS-EMOA~\cite{BeuNauEmm2007ejor}, MO-CMA-ES~\cite{IgeHanRot2007ec} (hypervolume-based) and R2-EMOA~\cite{BroWagTrau2015r2}, typically follow a two-step process:
	solutions are first ranked based on Pareto dominance and ties (nondominated solutions) are then broken based on a quality indicator
	(instead of the density metric in Pareto-based archivers).
	Algorithm~\ref{Alg:IBEA}
	gives the procedure of indicator-based archivers from the literature. As can be seen from the algorithm,
	the archiver determines the set $D$ of solutions least contributing to the indicator value, that is, solutions whose removal from the considered nondominated set would lead to the best indicator value compared to the removal of any other solution (line~2).
	Afterwards,
	if the new solution belongs to the set $D$ of least-contributing solutions, then the archive is unchanged (line~4); otherwise, a solution from $D$ is removed randomly (lines~6--7).
	In some indicator-based MOEAs,
	the new solution and old ones are not distinguished, e.g., in R2-EMOA~\cite{BroWagTrau2015r2}.
	That is,
	one of the least-contributing solutions will be randomly selected to remove, whether it is the new solution or the old one.
	Removing an old solution when it has the same indicator value as the new one may cause a cyclic behavior
	(i.e., solutions may enter and exit the archive many times during the archiving process) for some indicators like R2~\cite{HanJas1998}. The cyclic behavior prevents convergence and, thus, any limit properties. Moreover, randomly removing solutions with the same value of a weakly Pareto compliant indicator prevents set-monotonicity even when the archiver considers the dominance relation between solutions first (like in most indicator-based archivers). 
	For example, let us consider three solutions $\{a,b,c\}$ with the same indicator value, and $a$ dominates $c$ but $b$ is mutually nondominated with $a$ and $c$. Imagine an archive of capacity one that at $t=1$ only contains $a$. At $t=2$, the archiver receives $b$ and (randomly) removes $a$. At $t=3$, the archive receives $c$ and (randomly) removes $b$. As result, the archive at $t=3$ is dominated by the archive at $t=1$.
	
	Indicator-based archivers have different properties depending on whether the indicator used is Pareto compliant, weakly Pareto compliant or neither.
	In any case, all of them hold the \emph{limit-stable} property since they always maximise/minimise the indicator value, as long as they remove the newest solution when two solutions have the same indicator contribution.
	If the indicator used is not Pareto compliant,
	then the archiver will not have any Pareto dominance-related properties like \emph{point-monotone}, \emph{set-monotone} and \emph{limit-Pareto-subset}.
	Representative example are IGD-based~\cite{CoeSie2004igd} archivers,
	e.g., \cite{SunYenYi2019igd,TiaCheZha2017indicator}.
	If the indicator is Pareto compliant like hypervolume,
	the archiver may hold most of the theoretical properties including \emph{limit-optimal}, thus we will discuss them in the next section.
	
	If the indicator is weakly Pareto compliant,
	the archiver (based on Algorithm~\ref{Alg:IBEA}) may hold many properties but not \emph{limit-optimal} since the indicator may not be able to distinguish between solution sets subject to the \Better-relation (Definition~\ref{Def:weak_compliance}).
	Such an example is \ARtwo~\cite{BroWagTrau2015r2} in Table~\ref{Table:classifcation}.
	When the ideal point is unchanged,
	\ARtwo holds the anytime properties \emph{point-monotone} and \emph{set-monotone}.
	Since the ideal point can always be settled in the limit sense,
	\ARtwo always holds the limit property \emph{limit-Pareto-subset}.
	
	Lastly,
	it is worth noting that despite using a weakly Pareto compliant indicator,
	the archiver proposed in Algorithm~\ref{Alg:Weak_compliance} is \emph{limit-optimal}, thus it belongs to the class discussed next.
	The essential difference between Algorithm~\ref{Alg:Weak_compliance} and existing indicator-based archivers (presented in Algorithm~\ref{Alg:IBEA}) is that Algorithm~\ref{Alg:Weak_compliance} does not accept duplicate solutions in its archive,
	which makes the archive always ``tight'' and have room for accommodating different nondominated solutions.
	In contrast,
	in Algorithm~\ref{Alg:IBEA} the archiver allows duplicate solutions.
	A duplicate solution may not be able to be replaced by a new nondominated solution since adding that nondominated solution into the archive may not necessarily lead to a better indicator value of the archive for a weakly Pareto compliant indicator.
	

	\subsection{Class IV: Archivers Holding the Limit-Optimal Property and also Being of Practical Use}\label{sec:classIV}
	
	\newcommand{\boxb}[1]{\ensuremath{box^{(#1)}}}
	\newcommand{\barb}{\ensuremath{\bar{b}}}

	Archivers in this class hold the critical property \textit{limit-optimal} as well as being capable of diversifying their solutions.
	There do not exist many known archivers having these two desirable properties.
	Three representatives are
	\AHV~\cite{KnoCor2003tec}, SMS-EMOA~\cite{BeuNauEmm2007ejor}, and MGA~\cite{LauZen2011ejor},
	though one may expect more to emerge in the future
	since an archiver based on a weakly Pareto compliant indicator can also hold these desirables (if designed properly),
	as we proved previously.
	
	\begin{figure}[tbp]
		\centering
		\includegraphics[scale=0.25]{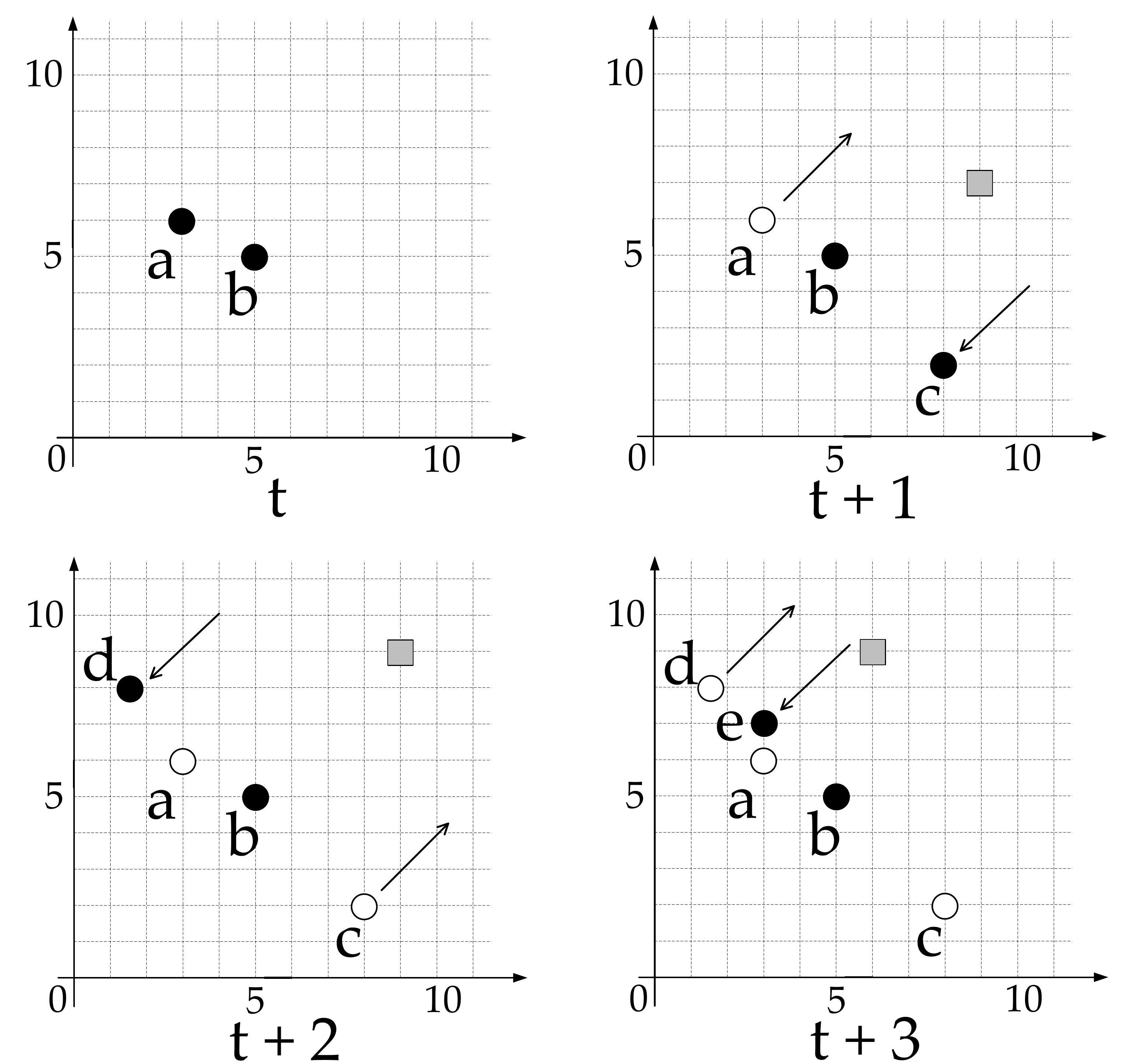}
		\caption{Illustration of the deterioration of hypervolume-based archiving
			with adaptive reference point. In this example, the reference point is the worst objective values in
			all solutions at the current timestep increased by one, as in
			SMS-EMOA~\cite{BeuNauEmm2007ejor}) and the capacity of the archive is \mbox{$N=2$}.
			Black circles denote solutions in the archive, and hollow circles denote
			solutions removed from the archive.  The grey square denotes the reference
			point used in the hypervolume calculation.
			At the timestep \mbox{$t+1$}, solution~$c$ arrives and $a$ is removed since its
			hypervolume contribution (HVC) is the lowest:
			\mbox{$\text{HVC}(a)=\text{HV}\{a, b,c\} - \text{HV}\{b,c\} = 2$},
			\mbox{$\text{HVC}(b)= \text{HV}\{a,b,c\}$} ${} - \text{HV}\{a,c\}=3$,
			$\text{HVC}(c)=\text{HV}\{a, b,c\} - \text{HV}\{a,b\} = 3$.
			At the timestep \mbox{$t+2$}, solution $d$ arrives and solution $c$ is removed
			since it now has the lowest HV contribution: $HVC(b)=9$, $HVC(c)=3$,
			$HVC(d)=3.5$.  At the timestep \mbox{$t+3$}, solution $e$ arrives and solution
			$d$ is removed since it now has the lowest HV contribution: \mbox{$HVC(b)=2$},
			\mbox{$HVC(d)=1.5$}, \mbox{$HVC(e)=2$}.  However, $e$ is dominated by $a$, thus the
			archive at the timestep \mbox{$t+3$} is dominated by the one at the timestep $t$, i.e.,
			the archive shows \emph{set-deterioration}.}\label{fig:SMS}
	\end{figure}
	
	The first two archivers \AHV and SMS-EMOA are both based on the hypervolume indicator, which is Pareto compliant.
	The difference between them is that in \AHV the reference point is fixed during the archiving process,
	whereas in SMS-EMOA the reference point is adaptive.
	
	The archiver \AHV~\cite{KnoCor2003tec,KnoCorFle2003} is arguably the earliest archiving algorithm using the hypervolume indicator.
	It presets a reference point for the hypervolume calculation
	and eliminates the solution with the least hypervolume contribution.
	The greedy nature of its update in the one-at-a-time case means that the resulting archive cannot maximise the hypervolume in the anytime scenario~\cite{BrigFri2009foga},
	so it does not meet \emph{Pareto-subset} nor \emph{point-monotone}.
	In the limit case, however, the archive will converge to a set of maximum hypervolume among all sets of size $N$,
	which implies that all its elements will be Pareto-optimal~\cite{BriFri2010gecco}.
	Nevertheless,
	its use in practice is not without challenges since setting an appropriate reference point a priori may require problem-specific knowledge.
	
	In SMS-EMOA, like many well-established hypervolume-based archivers,
	the reference point is adapted,
	usually set to be a slightly worse vector than the nadir point of the nondominated set obtained.
	However,
	changing the reference point may lead to the archiver \emph{set-deteriorating}.
	Figure~\ref{fig:SMS} gives an example of the deterioration of
	the hypervolume-based archiving with an unfixed reference point.
	As the figure shows,
	the archive at the timestep $t+3$ is dominated by its past version at timestep $t$,
	in which the reference point is determined adaptively by solutions in the archive and the new arrival. Such a HV-based archiver holds the three limit properties only if the nadir point settles down in the limit, which may or may not happen depending on the problem.
	
	\begin{algorithm}[tb]
		\let\oldnl\nl
		\newcommand{\nonl}{\renewcommand{\nl}{\let\nl\oldnl}} 
		\caption{Multi-level Grid Archiver (MGA)}
		\label{Alg:MGA}
		\small
		\DontPrintSemicolon
		\SetAlgoLined
		\KwIn{$A^{(t-1)}$, $s^{(t)}$}
		\uIf{$\exists a \in A^{(t-1)}, a \prec s^{(t)}$}
		{
			$A^{(t)} \assign A^{(t-1)}$
		}
		\Else
		{
			$A' \assign \min(A^{(t-1)} \cup \{s^{(t)}\}, \prec)$\;
			\uIf{$\abs{A'} \leq N$}{
				$A^{(t)} \assign A'$
			}
			\Else
			{
				\tcp{$\barb$ is the largest box index possible in $A'$.}
				$\barb \assign \floor*{\log_2(\max_{a\in A'}\max_{i\in\{1,\dots,d\}}\abs{a_i})}+1$\;
				$\mathcal{Z} \assign \{b\in\mathbb{Z}, b\leq \barb \mid \exists a, a'\in A,$\;
				\nonl\hspace{9em}$\boxb{b}(a) \preceq \boxb{b}(a') \land a \neq a'\}$\;
				\tcc{$\boxb{b}(a)$ is the box index vector of $a$ at the coars\-eness level $b\in\mathbb{Z}$,
					i.e., $\boxb{b}(a)_i = \floor{a_i \cdot 2^{-b}}$, for $i=1,\dots,d$ and $d$ is the number of objectives.}
				\uIf{$\mathcal{Z} = \emptyset$}
				{
					$A^{(t)} \assign A^{(t-1)}$
				}
				\Else 
				{
					$\beta \assign \min\mathcal{Z}$\;
					$D \assign \{a\in A' \mid \exists a'\in A',$\;
					\nonl\hspace{7em}$\boxb{\beta}(a') \preceq \boxb{\beta}(a) \land a' \neq a\}$\;
					\uIf {$s^{(t)} \in D$}
					{
						$A^{(t)} \assign A^{(t-1)}$
					}
					\Else(\tcp*[h]{Draw a solution randomly from $D$.})
					{
						$a \assign \text{sample}(D)$ \;
						$A^{(t)} \assign A' \setminus \{a\}$
					}
				}
			}
		}
		\KwOut{$A^{(t)}$}
	\end{algorithm}
	
	MGA or multi-level grid archiver~\cite{LauZen2011ejor} can be seen as
	an improved version of the $\epsilon$-Pareto archiver~\cite{LTDZ2002b}.
	It compares solutions using
	a hierarchy of boxes of different coarseness over the space.
	Algorithm~\ref{Alg:MGA} gives the procedure of MGA.
	As can be seen in the algorithm,
	the standard Pareto dominance relation is first used to compare solutions (lines~1--6),
	and, when the number of nondominated solutions exceeds the capacity of the archive,
	solutions are compared using \emph{box-dominance}, i.e., applying the Pareto dominance relation to their box indices  (lines~7--23).
	The size of the boxes is not set by a parameter, unlike the $\epsilon$-approx and $\epsilon$-Pareto archivers,
	but determined by the smallest coarseness level $\beta$
	that leads to at least one solution being weakly box-dominated (lines~7--14).
	If the new solution $s^{(t)}$ belongs to such weakly dominated boxes,
	then it is rejected (lines~15--16),
	otherwise an arbitrary solution from such boxes is eliminated (lines~17--20).
	
	MGA does not respect the property \textit{point-monotone}, as shown by~\cite{LopKnoLau2011emo},
	since
	any nondominated solution can enter the archive if the archive is not full, even if this solution was dominated by a solution previously removed.
	However, MGA cannot \emph{set-deteriorate} because it implicitly optimises a Pareto compliant indicator such that accepting a new solution into the archive,
	possibly replacing an existing one,
	will always lead to a better value of the indicator~\cite{LauZen2011ejor}.
	It also eventually converges to an archive that minimises this indicator value,
	i.e., an optimal approximation of bounded size,
	hence, it is \emph{limit-optimal}, which implies all other limit properties.
	
	
	Despite the above desirable properties, a recent study has shown that
	MGA is unlikely to preserve  boundary solutions~\cite{Li2021telo}.
	This is a common problem of all archivers using box- or $\epsilon$-based dominance,
	such as $\epsilon$-MOEA~\cite{DebMohMis2005epsilon} and GrEA~\cite{YanLiLiuZhe2013}.
	In addition,
	the archive maintained by MGA is not uniformly distributed along the Pareto front~\cite{Li2021telo}.
	This occurrence can be attributed to the facts that (1) MGA picks one solution randomly to remove
	when there are multiple  solutions at the $\beta$ level,
	and (2) the new solution is not allowed to enter the archive
	if it is at the same level as some of the solutions in the archive
	(lines~15--16 in Algorithm~\ref{Alg:MGA}).
	
	\subsection{Computational Complexity of Archivers}

		Archivers proposed as part of an MOEA (such as NSGA-II, MOEA/D, and SMS-EMOA) often dominate the computational complexity of the MOEAs (when ignoring the cost of solution evaluations), thus their complexity is the same as their corresponding MOEA and can be found in the original papers~\citep{Deb02nsga2,ZitLauThie2002spea2,BeuNauEmm2007ejor,ZhaLi07:moead,Jen03}.
		For other archivers,
		the computational complexity per solution update is as follows.
		\Adom, $\epsilon$-approx and $\epsilon$-Pareto require $O(mN)$, where $N$ is the archive capacity and $m$ is the number of objectives. 
		The computational complexity of the hypervolume archivers (e.g., \AHV ) strongly depends on the algorithm employed and the number of objectives~\citep{GueFonPaq2021hv}. 
		Despite having the same properties as hypervolume-based archivers,
		MGA has less computational cost in general;
		its time complexity is $O(mNL)$,
		where $L$ is the length of the binary encoded input~\cite{LopKnoLau2011emo}.

	\section{Important Issues in Archiving}\label{sec:issues}
	
	In this section,
	we discuss several important issues of archiving,
	including its performance, various attributes as well as connection with research topics in other fields.

	\subsection{Theoretical Desirables vs Practical Use}
	
	It is certainly helpful that archivers hold desirable theoretical properties, but it is more important that archivers are of practical use.
	In particular,
	avoiding convergence to a small region of the Pareto front is a critical, practical desirable
	(i.e., the first practical desirable in Table~\ref{Table:classifcation}).
	Archivers that fail to diversify are not very useful in practice, since they produce a poor approximation of the actual Pareto front, even if they may hold most theoretical desirables, e.g., \Adom~\cite{RudAga2000cec}.
	Having an archive of controllable size, that is, with a user-specified maximum capacity and that stores as many nondominated solutions as can fit in that capacity, is also an important practical desirable,
	since one may not want to end up with a population during the search that is a too small or too big.
	The lack of this property may explain why archivers in Class II, e.g., $\epsilon$-Pareto~\cite{LTDZ2002b}, are not widely used in practice.
	In addition,
	archivers may need to compromise practical desirables in order to meet theoretical ones.
	For example,
	in contrast to SMS-EMOA~\cite{BeuNauEmm2007ejor} which adaptively sets the reference point according to the input sequence,
	the archiver \AHV~\cite{KnoCorFle2003} requires setting a fixed reference point  a priori, with may lead to poor diversification if the reference point is either too far or too close to the nadir point of the Pareto front~\cite{IshImaSet2018refpoint}.
	In short,
	we cannot say that an archiver without any theoretical property (i.e., those in Class I) performs worse
	than those with some of them (Classes II--IV) in practice.

	Yet,
	equally,
	we would never say that an archiver without any theoretical property is the best in practice,
	even when used to manage  the population in an MOEA.
	Indeed, a lack of theoretical properties may harm the search progress,
	as reported on various synthetic and practical scenarios~\cite{LTDZ2002b,FieEveSing2003tec,AguZapLieVer2016many,Fie2017gecco,CheLiYao2019}.
	The \emph{set-monotone} property, in particular, prevents the oscillation of the archive's performance~\cite{LiYao2019emo}
	and leads to the eventual convergence of the archive, which can be used as a stopping condition of an MOEA.
	

	\subsection{What Is An Ideal Archiver?}\label{sec:ideal}
	
	Apart from holding theoretical desirables,
	one may ask what an ideal archiver is in practice.
	In general,
	an archiver can be called ``ideal''
	if it can maintain a representative subset of all Pareto-optimal solutions of any sequence at any time.
	There are three major attributes with respect to sequences that can affect the performance of an archiver: the dimensionality of solution vectors in the sequence,
	the shape of the Pareto-optimal solutions of the sequence,
	and the order of the solutions in the sequence.
	As such,
	an ideal archiver needs to work well on various sequences
	and be (almost) unaffected by the dimensionality, shape and order of solutions.
	
	In this sense,
	existing archivers unfortunately are far from being ideal.
	Pareto-based archivers,
	which use Pareto dominance and a density estimator as the selection rules,
	fail to scale up with the number of objectives~\cite{WagBeuNau2007:many,PurFle2007tec}.
	In contrast,
	some modifications, which aim to make Pareto-based archivers work in a high-dimensional objective space,
	may be detrimental to their performance in a low-dimensional space.
	For example,
	shift-based density estimation (SDE)~\cite{LiYanLiu2014shift},
	which enables Pareto-based archivers to work well in many-objective optimisation,
	may affect their ability to maintain the boundary solutions
	when dealing with bi- or tri-objective problems~\cite{LiTanLiYao2016stochastic,Liu2020,Xue2022}.
	
	Indicator-based and decomposition-based archivers are more effective in dealing with increasing number of objectives.
	However,
	such an approach makes them sensitive to the shape of the Pareto front of the sequence.
	It is known that decomposition-based methods (e.g., MOEA/D)
	may not be able to maintain a well-distributed archive for irregular Pareto front shapes~\cite{IshSetMas2017shape}.
	Indicator-based methods may also struggle on some shapes,
	depending on the characteristics of their indicators.
	For example,
	SMS-EMOA has been found to be less effective on problems with inverted simplex-like Pareto front shapes~\cite{Shang2021}, with highly degenerate Pareto fronts~\cite{LiGroYanLiu2018multi},
	or with many dominance resistance solutions~\cite{LiGroYanLiu2018multi}.\footnote{Dominance resistant solutions
		are those with a extremely poor value in one
		objective but with (near) optimal values in the others~\cite{IkeKitShi2001cec}.}

	The effect of the order of solutions fed to the archive has been rarely studied empirically.
	A recent study has shown that the order matters in the sense that
	different sequences of the same set of solutions can produce very different archiving results~\cite{Li2021telo}.
	Archivers,
	not only from Class I (e.g., NSGA-II and NSGA-III) but also from Class III (e.g., MOEA/D-TCH) and Class IV (e.g., SMS-EMOA and MGA),
	may struggle to maintain a well-distributed archive when facing ``interesting'' sequences of solutions,
	even on low-dimensional problems with regular Pareto front shapes (i.e., simplex shapes)~\cite{Li2021telo}.
	
	\subsection{Batch Size}\label{sec:size}
	
	Batch size in the archiving process refers to the number of solutions fed to the archive at one step.
	It is often set to either one (e.g., in most theoretical studies~\cite{KnoCor2003tec,LopKnoLau2011emo} and some MOEAs~\cite{BeuNauEmm2007ejor,ZhaLi07:moead})
	or to the archive/population size (e.g., in many MOEAs~\cite{Deb02nsga2,ZitKun2004ppsn,DebJain2014:nsga3-part1}).
	In the context of EC,
	the former is called steady-state evolution mode (i.e., $\mu+1$)
	and the latter is called generational evolution mode (i.e., $\mu+\mu$), where $\mu$ denotes the archive/population size.
	It is worth noting that the evolution mode of an optimiser is orthogonal to the batch-size of the archiver: a ($\mu+1$) archiver can always handle a ($\mu+\mu$) evolution mode by processing the $\mu$ offspring one at time, but doing so the ($\mu+1$) archiver will not gain any of the properties of a  $(\mu+\mu)$ archiver. Similarly, a ($\mu+\mu$) archiver can always be combined with a ($\mu+1$) evolution mode, but at the cost of losing all the properties that result from updating the archive with many solutions at a time.
	
	For a given sequence of solutions,
	the question of which size is better may be trivial
	since a bigger batch size always gives the archiver more knowledge about future input,
	so that the archiver can make more informed decision.
	%
	Yet, when being used in the process of generating offspring solutions in MOEAs,
	a  ($\mu + 1$) archiver  can be more suitable in some cases
	since an instant update of the source archive forming the mating pool
	may be helpful in generating better offspring,
	particularly when evaluating solutions  is expensive.
	
	The batch size has important effects of the properties of archivers.
	\Citet{ZitThiBad2010tec} proved that there is no $(\mu+1)$ archiver that never decreases the hypervolume of the archive, which implies \emph{set-monotone} (Property~\ref{prop:set_monotone}),
	and ends up with an archive of maximum hypervolume when given the best possible input sequence starting from any sub-optimal archive.
	\citet[Th.~2]{BriFri2014convergence}
	confirmed this result
	and extended it to $(\mu+\lambda)$ with $\lambda < \mu$ \cite[Th.~5]{BriFri2014convergence}. There are $(\mu+\mu)$ archivers, however, that are able to reach the maximum hypervolume when given the best possible input sequence \cite[Th. 3.4]{ZitThiBad2010tec}, \cite[Th. 3]{BriFri2014convergence}.
	
	One of the main practical reasons for preferring a $(\mu+1)$ archiver instead
	of a $(\mu+\mu)$ one is the additional computational cost of choosing the
	optimal subset from all $\binom{2\mu}{\mu}$  subsets, however,
	there are efficient algorithms for both hypervolume and
	$\epsilon$-indicators on bi-objective
	problems~\cite{BriFriKli2014subset} and further improvements in higher dimensions are possible~\cite{GueFonPaq2021hv}.

	\subsection{Unbounded Archive}\label{sec:unbounded_archive}
	
	Archivers discussed so far maintain an archive of bounded capacity, that is,  when the number of nondominated solutions exceeds the capacity of the archive,
	the archiver needs to remove a solution.
	Since the archiver does not know the future input,
	it is an online algorithm whose decisions cannot be guaranteed to be optimal~\cite{KnoCor2003tec}, thus no archiver can guarantee an optimal approximation of bounded size for any finite sequence~\cite{KnoCor2004lnems}.
	Furthermore,
	most archivers will deliver a final archive that consists of many solutions that are dominated by solutions removed previously (\emph{point-deteriorate})~\cite{LiYao2019emo}.
	
	An unbounded archive that stores all nondominated solutions ever generated does not have the same limitations as an online archiver of bounded size.
	If at each timestep an archiver selects  a small subset of the Pareto optimal solutions (e.g., to present to the decision maker) from an unbounded archive of the input sequence seen so far, then the selected subset would never point-deteriorate. 
	For some problems, modern computers may be able to keep  hundreds of thousands of their solutions in memory, thus an unbounded archive becomes increasingly viable for some applications~\cite{FieEveSing2003tec,BroTus2019bench,IshPanSha2020unbounded}.
	Research involving an unbounded archive includes
	directly using it to store high-quality solutions generated by an MOEA~\cite{Ish1998,RudTraSen2013evenly,RudSchGri2016coa,BroTus2019bench,PanIshSha2020,BezLopStu2019gecco},
	incorporating it into an MOEA as an important algorithm component~\cite{WanSunJin2019multi},
	benchmarking various MOEAs~\cite{TanIshOya2017,TanOya2017gecco,BroTus2019bench},
	benchmarking bounded versus unbounded archivers~\cite{BezLopStu2019gecco},
	using it to identify if the search stagnates~\cite{Li2023},
	and designing efficient data structures for it~\cite{FieEveSing2003tec,YueGaoWag2012cec,Gla2017fast,JasLus2018ndtree,NanShaIsh2020reverse,Fieldsend2020data}.
	In addition,
	selecting from an unbounded memory can be seen as an offline archiving algorithm, in particular, a subset selection problem,
	where the archiver knows the whole input sequence and its task is to select a specified number of solutions to represent the whole archive.
	Several subset selection methods~\cite{BriFriKli2014generic,SinBahRay2019distance,Gu2023},
	along with benchmarking test data~\cite{ShaShuIsh2023is},
	have been proposed.
	They consider various indicators as selection criteria,
	ranging from common ones used in the area
	such as hypervolume~\cite{BriFriKli2014generic,KuhFonPaqRuz2016hvsubset,BriCabEmm2018maximum,GroMan2019hvsubset}, $\epsilon$-indicator~\cite{BriFriKli2014generic} and \IGDplus~\cite{CheIshSha2020subset}
	to similarity-based metrics such as distance-based~\cite{SinBahRay2019distance,ShaIshNan2021subset} and clustering-based ones~\cite{CheIshSha2021clustering}, and bi-criteria (i.e., multiobjectivisation)~\cite{Gu2023}.
	
	Despite the capacity of modern computers,
	a downside of using an unbounded external archive is still its computational cost.
	When the optimisation problem is computationally hard but solution evaluation is fast,
	an algorithm may produce millions of nondominated solutions thus leading to a very slow archiving process, particularly for continuous MOPs which typically have infinitely many Pareto optimal solutions.
	On the other hand,
	costly solution evaluations may imply a simulation process that generates large amounts of data on top of the decision and objective vectors,
	which increases the memory requirements for storing such solutions.
	In some real-world problems,
	solutions may actually map to a particular chemical or physical object,
	whose construction is economically costly,
	and thus the archive is bounded by how many of those objects it can store in the real-world~\cite{Kno2009closed}.

	An unbounded archive is mainly used for storing the best solutions found so far.
	It is rarely used as a population to generate new solutions,
	except in very few cases~\cite{KraGlaHan2016unbounded}.
	An unbounded population made up of all nondominated solutions generated may cause harmful genetic drift phenomenon due to over-representation of some areas in the search space,
	especially when the mapping of the search space to the objective space is not uniform.


	\subsection{Related Problems in Theoretical Computer Science}
	Speaking of subset selection,
	there is a similar research problem in the field of theoretical computer science:
	given a specified accuracy $\epsilon$,
	determine a minimum set of solutions
	such that any solution of a given set (or of a multi-objective problem) can be $\epsilon$-dominated
	by at least one of its solutions~\cite{PapYan2000focs,VasYan2005effic,KolPap2007approx,DiaYan2008succint,DiaYan2009small}.
	Its dual problem tends to be more relevant (essentially, an offline archiving problem):
	given a set of nondominated solutions,
	find a specified number of $N$ solutions
	that provide the best approximation to the Pareto optimal set with respect to the $\epsilon$-dominance.
	In contrast to using the hypervolume as the selection criterion,
	using the $\epsilon$-dominance provides the decision-maker with a measure of the approximation error $\epsilon$ of the subset selected.
	However,
	like hypervolume-based subset selection~\cite{BriFri2012tcs},
	solving this problem is difficult~\cite{VasYan2005effic,DiaYan2009small}.
	When the number of objectives is two,
	the problem is already NP-hard despite having a polynomial time approximation~\cite{VasYan2005effic,BriFriKli2014subset};
	when the number of objectives is larger than two,
	any multiplicative approximation is impossible, unless P$=$NP~\cite{DiaYan2009small}.
	
	

	\section{Future Research Directions}\label{sec:future}
	
	After providing an overview of important issues in archiving in the previous section,
	this section suggests several research directions that deserve attention in coming years.
	
	\subsection{Developing Archivers with Theoretical Desirables and Practical Use}
	
	Most existing work in the EMO area focuses on the practical performance of archiving algorithms
	(e.g., with respect to the hypervolume and IGD indicators),
	ignoring their theoretical properties.
	However, an archiving algorithm %
	with any of the limit properties avoids that the same solutions enter and exit the archive repeatedly, which causes fluctuation of the quality of the archive~\cite{FieEveSing2003tec}, while the point- and set-monotonicity properties avoid that the quality of the archive/population deteriorates over time~\cite{Li2021telo}.
	
	Fortunately,
	archivers in Classes III and IV (cf. Table~\ref{Table:classifcation})
	have the potential to hold both theoretical and practical desirables,
	in contrast to those in Classes I and II where either of them is missing.
	Archivers in Class III strike a good balance between theoretical and practical desirables,
	as evidenced by their wide use in the EMO area. 
	However, they may not hold \emph{set-monotonicity}, 
	which has the risk of the archive/population deteriorating over time.
	Class IV is a class having high potential to be explored. It is the only class that guarantees limit-optimal, diversification and a controllable size.
	Given that archivers based on weakly Pareto compliant indicators
	can hold the same theoretical desirables as those based on Pareto compliant indicators,
	we expect that more archivers from Class IV will emerge in the near future.


	\subsection{Order of Solutions Arriving}\label{sec:order}
	
	In contrast to extensive archiving studies on the effect of the number of objectives and the Pareto front shape,
	there are very few works studying how the order of solutions in the sequence affects archivers.
	It has been shown~\cite{LopKnoLau2011emo,Li2021telo} that commonly-used archivers,
	such as NSGA-II, SPEA2, MOEA/D, SMS-EMOA and NSGA-III,
	may not be reliable on even the simplest Pareto fronts (i.e., 1D/2D simplex shapes)
	if solutions arrive one-at-a-time ($\mu+1$) in pathological orders.
	
	Although solutions generated by a search algorithm are expected to get better over time,
	input sequences may greatly differ in practice depending on the optimisation problem and search algorithm.
	In particular, the
	landscape of optimisation problems may  produce quite different patterns of solution sequences.
	For example,
	well-established test problems KUR~\cite{Kur1990variant} and UF~\cite{ZhaZhoZha2009cec} typically lead MOEAs to start their search from a particular region and gradually move to others.
	Problems involving many local optima in the search space (e.g., DTLZ3~\cite{DebThiLau2005dtlz}, ML-DMP~\cite{LiGroYanLiu2018multi} and MNK-landscapes~\cite{VerLieJou2013ejor})
	easily lead to MOEAs generating dominance resistant solutions during the search.
	In many real-world problems,
	particularly problems with strict constraints or prioritised objectives,
	the search often starts from a tiny feasible region and then gradually expands to large regions,
	such as in the test suite generation for software product line~\cite{HieLiLiuPar2020}
	and in resource allocation for software testing~\cite{SuZhaYue2021enhanced}.
	
	On top of various sequences resulting from optimisation problems,
	there exist many multi-objective optimisers that tend to search for solutions in a certain order.
	For example,
	the algorithm in~\cite{PaqStu2003tpls},
	developed for the bi-objective TSP problem,
	starts the search from an extreme solution
	and then gradually moves to the other extreme.
	The algorithm presented in~\cite{DubLopStu2011amai} generates search directions that aim to fill the largest gap in the current approximation of the PF.
	The algorithms in~\cite{LusTeg2009tpls,HeYen2016many} search first for all extreme solutions of the Pareto front and then trade-off solutions between those.
	Pareto local search algorithms generate solutions that are neighbours in the decision space of a single solution taken from its archive,
	and thus the generated sequences often consist of very similar solutions~\cite{PaqChiStu2004mmo,DubLopStu2015ejor,LopLieVer2014ppsn}.
	Similar search strategies are also common in conventional mathematical optimisation~\cite{SteRad2008computing,DasDiaYan2016chord}.

	In short,
	the variety of optimisation problems' nature and search algorithms' behaviour
	(as well as the stochasticity of MOEAs) may lead to different types of solution sequences.
	Investigating their effect on archivers and, hence, developing reliable algorithms on various sequences
	are a potential direction waiting to be explored.

	\subsection{Archiving Based on Specific Indicators}
	Archivers based on a specific indicator represent the archive's quality through a scalar value and
	aim to find the archive that maximises/minimises that value.
	Frequently used indicators for this end include hypervolume~\cite{ZitThi1998ppsn},
	IGD~\cite{CoeSie2004igd}, $\epsilon$-indicator~\cite{ZitThiLauFon2003:tec}, Hausdorff indicator~\cite{SchEsqLarCoe2012tec},
	R2~\cite{HanJas1998}, and
	\IGDplus~\cite{IshMasTanNoj2015igd},
	which can cover both the proximity to the Pareto front and the diversity along the front.
	As discussed in Section~\ref{sec:classIII}, such archivers
	hold (or can
	be modified to hold) the limit properties and (weakly) Pareto compliant indicators enable the archiver to hold the three limit desirables. Indicators that are not compliant with Pareto dominance may enable the design of an archiver that is \textit{limit-stable}, as long as the archive is monotone with respect to the indicator value.
	
	Since an indicator-based archiver aims to maximise (or minimise) the indicator value,
	it is always of interest to know how good a value can be achieved by the archiver theoretically.
	There are several studies on this topic~\cite{BriFri2010ppsn,BriFri2014convergence,RudSchGri2016coa}.
	However, far more work is needed to understand the theoretical limitations of such archivers.
	
	
	In addition,
	a well-established concept in the theory of online algorithms,
	called competitive analysis~\cite{BorEly1998online,Albers2003online},
	fits nicely in evaluating indicator-based archivers
	(as well as other archivers as long as a scalar quality indicator is used to perform competitive analysis).
	Competitive analysis compares the relative
	performance of an online algorithm and an offline algorithm for the same
	sequence, i.e., how much worse the online algorithm performs due to not knowing the
	future input.  Specifically, the competitive ratio of an algorithm is defined
	as the worst-case ratio of its quality divided by the optimal quality, over all
	possible sequences.  The optimal quality can be defined by using the
	unbounded archive \cite{LopKnoLau2011emo} or using the best possible
	bounded size archive or perhaps something else that is actually achievable.
	\citet{LopKnoLau2011emo} suggested ``\emph{to use competitive analysis
		techniques from the field of online algorithms to obtain worst-case bounds,
		in terms of a measure of `regret' for archivers}'', yet, to the best of our
	knowledge, the only analysis available is the work of
	\citet{BriFri2014convergence}, who defined the competitive ratio based on the hypervolume metric, proved upper and lower bounds of this competitive ratio  for different classes of hypervolume-based
	archivers and presented a computationally-efficient hypervolume-based archiver with a constant competitive ratio.  Their analysis is based on the best-case and worst-case input sequences
	and they pointed out that an average-case analysis may lead to a different choice of the archiver. A similar analysis for other types of archivers and competitive ratios based on other quality metrics, such as the $\epsilon$-indicator or \IGDplus, remains to be done.
	
	Theoretical analysis on competitive ratios or regret according to various quality indicators could be complemented by empirical analysis that is not restricted to archivers explicitly using the quality indicators being measured.
	\citet{LopKnoLau2011emo} measured the ratio between the quality, in terms of hypervolume and $\epsilon$-indicator, of various archivers and of the unbounded archiver.
	
	
	Further theoretical development would be welcome. For example, the limit properties (Props.~\ref{def:limit-stable}, \ref{def:limit-pareto} and~\ref{def:limit-optimal}) are not very useful in practice unless the time to converge to the limit is tractable. Thus, bounds on the number of steps/input solutions required to reach the limit would be of practical interest.
	

	\subsection{Internal Archive vs External Archive}
	
	Two major roles of archiving in EMO are to
	(1) store a set of representative Pareto optimal solutions for \emph{a posteriori} decision-making and
	(2) maintain a set of high quality solutions as the source to generate offspring.
	The different purposes of the two roles may need different archiving algorithms,
	though existing work usually does not distinguish them,
	e.g., the hypervolume-based archiver is widely used for both roles.
	A recent study has shown that a combination of relatively small internal archive/population of Class I
	and a large external archive of Class IV
	may be a good choice~\cite{BezLopStu2019gecco}. 
	The internal archive is focused on searching for promising solutions,
	while the external archive is focused on storing the best solutions found. In this setup, it may not matter
	if the internal archive \emph{set-deteriorates} or it is not \emph{limit-optimal}
	(an invariant population is not helpful for the search) as long as the external archive is.
	Nevertheless,
	much more studies are needed to investigate which are the best combinations of internal and external archivers.
	
	
	An external archive can also be used to
	monitor the evolutionary status of the internal archive/population.
	For example,
	in~\cite{LiYao2020ec} an external archive based on Pareto dominance and density criteria is used to check
	if the decomposition-based internal archive is trapped in partial regions of the optimal front.
	In this regard,
	it is beneficial that the different archivers consider complementary archiving criteria.
	This is one of the major reasons behind the development of various two-archive MOEAs~\cite{WanJiaYao2015twoarch2,QiMaLiu2014moead,LiYanLiu2016tec,LiCheFuYao2018twoarch,LiuYenGon2018twoarch,CheLiYao2017dynamic}.
	Moreover,
	automatically-designed MOEAs show that diverse choices of archiving criteria for environmental selection and external archiving often outperform well-known popular MOEAs,
	even after tuning the parameters of the latter~\cite{BezLopStu2015tec,BezLopStu2019ec}.

	\subsection{Interplay between Archiving and Solution Generation}
	As aforementioned,
	in the absence of an external archive that does not participate in the search,
	the population of an MOEA is used not only to store the best solutions found so far
	but also to generate new solutions.
	This is also the case for some multi-objective local search algorithms using bounded archives~\cite{LopLieVer2014ppsn,BloPerJouKesHoo2017emo}.
	That means that the sequence of solutions fed to the population is generated by itself.
	In this case,
	we may need to consider other factors in the archiving operation on top of solutions' quality,
	e.g., the life cycle of solutions in the archive.
	We want to exploit the very best solutions in the archive
	but may also want to explore new areas in which newly-generated, next-best solutions are located.
	This is essentially a problem of balancing exploration and exploitation. In multi-objective local search, solutions are marked as ``explored'' after their neighborhood is (partially or fully) explored~\cite{PaqChiStu2004mmo,DubLopStu2015ejor,LieHumMes2011}.
	Some MOEAs introduce the concept of ``ageing'' to prevent old high-quality solutions in the population from generating new solutions in order to help the search jump out of local optima~\cite{SchLip2011age}.

	Given the above, perhaps
	using a population serving both roles of storing and generating solutions is not ideal,
	despite the fact that it is the common practice in the EMO area.
	The final population returned to the decision-maker may contain many dominated solutions with respect to the sequence generated (i.e., all the solutions generated),
	while nondominated solutions may be discarded in the middle of the search~\cite{LiYao2019emo}.
	Therefore,
	an external archive that stores best solutions found is desirable.
	It is worth mentioning that this observation also applies to the two-archive/population approach
	since both archives/populations participate in the search.
	
	\section{Guidelines}\label{sec:guide}
	In this section, we provide guidance on how to choose an appropriate archiver for a given problem and also on how to identify which category an archiver belongs to for an MOEA. 
	
	\subsection{Choosing an Appropriate Archiver}

		The choice of an appropriate archiver depends on properties of the optimisation problem in hand as well as the search algorithm used. 
		If the optimisation problem is somewhat expensive in the sense that not many solutions can be generated in practice, the unbounded archiver is the best option since it never deteriorates and preserves all nondominated solutions ever generated.

		On the other hand, 
		if the optimisation problem is cheap in the sense that hundreds of thousands (or millions) of solutions can be potentially generated, archivers in Classes III and IV, which have theoretical and practical desirables, are better options.
		This is particularly the case for some recent MOEAs in which a large population size and many generations are required to guarantee good performance, e.g., MOEAs based on stochastic population update~\cite{Bian2023}. 
		Taking the non-elitist MOEA developed in~\cite{Liang2023} as an example, a population of 10,000 solutions and 5,000 generations were used.
		
		Amongst the two classes that have both theoretical and practical desirables (i.e., Classes III and IV), Class IV is in general more preferable since the archivers hold the property \emph{limit-optimal}, that is, a bounded archiver will eventually converge to an optimal approximation of bounded size if any Pareto-optimal solution may appear in the sequence an infinite number of times.   
		Currently, there are two groups of archivers in Class IV: the hypervolume-based archivers and MGA.
		A hypervolume-based archiver (e.g., the one in SMS-EMOA) is recommended if the quality of solution sets is more important than computational effort.
		From a theoretical and efficiency point of view, MGA is a good option; however, as reported in a recent empirical study~\cite{Li2021telo}, it may lead to not-uniformly-distributed archives and may lose boundary solutions. 
		Fortunately,
		we have proven here that any archiver based on a weak Pareto compliant indicator (e.g., $\epsilon$, \IGDplus and R2) can achieve the property \emph{limit-optimal}, if designed properly (e.g., following the steps of  Algorithm~\ref{Alg:Weak_compliance}). 
		Such archivers can be used when the hypervolume-based archiver is not suitable or when the decision maker's preferences are not in line with the hypervolume indicator.
		
		In addition,
		it is worth stressing that archivers having better theoretical desirables do not always imply better performance in practice. 
		For example, \AHV~\cite{KnoCor2003tec}, which sets a fixed reference point in the calculation of hypervolume, unconditionally holds \emph{set-monotone}; however, in many cases it would perform worse than the archiver in SMS-EMOA~\cite{BeuNauEmm2007ejor}, which conditionally holds that property, since a too-far or too-close reference point may lead to poor diversification of the solution set~\cite{IshImaSet2018refpoint}.
		As another example, MOEA/D-TCH, which holds more theoretical desirables than MOEA/D-PBI, may perform worse than the latter in some cases~\cite{ZhaLi07:moead}. 
		In short, when choosing an archiver, one needs to consider not only its theoretical properties but also its practical use, and the latter is often relevant to the implementation of archivers, the nature of the considered problem, and the decision-maker's preferences.
		
		The above guidance is for external archiving in EMO when the archive is not involved in the search, i.e., solutions in the sequence do not depend on which solutions are archived.
		In MOEAs, the population update process (the internal archiving)
		not only aims to preserve high-quality solutions in the population, but also to identify promising areas that are not well explored. 
		An option for designing archivers for MOEAs is to consider very different rules in the external and internal archivers; 
		for example, to allow the internal archiver to focus on diversity and exploration rather than theoretical guarantees of quality and convergence. Unfortunately, the best choice of an internal archiver when combined with an external one is still an open research question, but automatic design approaches can help in this choice~\citep{BezLopStu2019ec}.

	\subsection{Identifying the Archive Category}

		Given an MOEA, we may identify its category in Table~\ref{Table:classifcation} (more precisely, the category of its population update method) as follows. 
		We can distinguish three types of population update methods: Pareto-based, indicator-based and decomposition-based methods. 
		Pareto-based methods, which rank solutions by Pareto dominance first and a density estimator second, do not meet any theoretical desirables and belong to Class~I.
		
		Indicator-based methods, which use an indicator to measure the quality of a solution set (i.e., the population) and aim to minimise (or maximise) the indicator value, will at least meet some theoretical desirables, under certain rules\footnote{The archiving/selection rules need to ensure rejecting a new solution if the solution has the same contribution to the indicator value as the worst solution in the set to (see \ARtwo in Table~\ref{Table:classifcation}).}. 
		MOEAs guided by an indicator that is not weakly Pareto compliant, such an IGD-based one, will be \emph{limit-stable} as long as they remove the newest solution when two solutions have the same indicator contribution.  
		If the indicator is weakly Pareto compliant and the archive may contain duplicated solutions,
		then the corresponding MOEA will meet many theoretical desirables except \emph{limit-optimal}, such as R2-based MOEAs.
		Both types of MOEAs belong to Class~III. On the other hand, if the indicator is at least weakly Pareto-compliant and the archive never contains  duplicated solutions like Algorithm~\ref{Alg:Weak_compliance}, then the corresponding MOEA will meet \emph{limit-optimal}, thus belonging to Class~IV.
		
		Decomposition-based methods, which decompose the space through a number of weight vectors, may or may not hold theoretical desirables, depending on whether its archiving rules implicitly optimise a unary quality indicator that induces a total order between solution sets.
		For example, in MOEA/D-TCH, the archiving rules can be converted into a scalar value that is essentially the R2 indicator, so the algorithm belongs to Class~III.
		In NSGA-III, there does not exist an indicator that reflects the archiving rules (as both Pareto dominance and the distance to the weight vectors are considered), so no theoretical desirables are met.
		
		To sum up, an MOEA will meet
		some theoretical desirables (at least \emph{limit-stable}) if we can prove that its archive never decreases some quality indicator according to which the set of possible solutions sets can be ordered.
		%
		%

	\section{Conclusion}\label{sec:conclusion}
	Bounded archiving, i.e., storing a bounded set of representative high-quality 
	solutions is of good use in multi-objective optimisation. Not only is unbounded 
	archiving computationally impractical in many scenarios, but also a bounded 
	archive or population may help the search.
	In this paper,
	we conducted a systematic survey of multi-objective archiving,
	including
	\begin{itemize}
		\item We reviewed the formalisation of the archiving problem and of six desirable theoretical properties of bounded archivers. We extended these definitions to archivers that may store dominated solutions.
		\item We showed analytically that archivers based on a weakly Pareto compliant indicator (e.g., $\epsilon$, IGD$^+$ and R2)
		can achieve the same theoretical properties as archivers based on a Pareto compliant indicator (e.g., hypervolume).
		\item We exemplified representative archivers (including those in well-established MOEAs) and
		classified these archivers into four classes based on their theoretical and practical properties.
		\item We discussed important issues in designing and analysing multi-objective archivers.
		\item We suggested future research lines and pointed out several open questions. 
		\item We provided guidance on choosing appropriate archivers and identifying the archiver category for a given MOEA.
	\end{itemize}

	
	\section{Acknowledgment}
	
	This work was partially supported by the NSFC (Grant No. 62250710682), Research Institute of Trustworthy Autonomous Systems, the Guangdong Provincial Key Laboratory (Grant No. 2020B121201001), the Program for Guangdong Introducing Innovative and Entrepreneurial Teams (Grant No. 2017ZT07X386), and Shenzhen Science and Technology Program (Grant No. KQTD2016112514355531).

	\scriptsize
	\bibliographystyle{IEEEtranN}

\providecommand{\MaxMinAntSystem}{{$\cal MAX$--$\cal MIN$} {Ant} {System}}
\providecommand{\rpackage}[1]{{#1}}
\providecommand{\softwarepackage}[1]{{#1}}
\providecommand{\proglang}[1]{{#1}}

\end{document}